\documentclass{article}
\usepackage[normalem]{ulem} 
\usepackage{microtype}
\usepackage{graphicx}
\usepackage{booktabs}
\usepackage{multirow}
\usepackage{hyperref}
\usepackage{amsmath,amssymb,mathtools}
\usepackage{amsthm}
\usepackage{subcaption}
\usepackage{enumitem}
\usepackage[capitalize,noabbrev]{cleveref}

\usepackage[preprint]{icml2026}
\usepackage{float}

\theoremstyle{plain}
\newtheorem{theorem}{Theorem}[section]

\newtheorem{corollary}[theorem]{Corollary}
\theoremstyle{definition}
\newtheorem{definition}[theorem]{Definition}
\theoremstyle{remark}
\newtheorem{remark}[theorem]{Remark}

\icmltitlerunning{AI-Generated Image Detectors Overrely on Global Artifacts: Evidence from Inpainting Exchange}

\begin{document}
\raggedbottom

\twocolumn[{
    \icmltitle{AI-Generated Image Detectors Overrely on Global Artifacts:
    Evidence from Inpainting Exchange}

    \icmlsetsymbol{equal}{*}

    \begin{icmlauthorlist}
    \icmlauthor{Elif Nebioglu}{equal,ind}
    \icmlauthor{Emirhan Bilgi\c{c}}{equal,sorbonne,ipp}
    \icmlauthor{Adrian Popescu}{saclay}
    \end{icmlauthorlist}

    \icmlaffiliation{ind}{Independent Researcher}
    \icmlaffiliation{sorbonne}{Universit\'{e} Sorbonne, Pierre et Marie Curie, ISIR}
    \icmlaffiliation{ipp}{Institut Polytechnique de Paris, U2IS}
    \icmlaffiliation{saclay}{Universit\'{e} Paris-Saclay, CEA, LIST}

    \icmlcorrespondingauthor{Elif Nebioglu}{elifnebiogllu@gmail.com}
    \icmlcorrespondingauthor{Emirhan Bilgi\c{c}}{emirhan.bilgic@ip-paris.fr}

    \vskip 0.3in
}]

\printAffiliationsAndNotice{\icmlEqualContribution}

\begin{abstract}
Modern deep learning-based inpainting enables realistic local image manipulation, raising critical challenges for reliable detection. However, we observe that current detectors primarily rely on global artifacts that appear as inpainting side effects, rather than on locally synthesized content. We show that this behavior occurs because VAE-based reconstruction induces a subtle but pervasive spectral shift across the entire image, including unedited regions. To isolate this effect, we introduce Inpainting Exchange (INP-X), an operation that restores original pixels outside the edited region while preserving all synthesized content. We create a 90K test dataset including real, inpainted, and exchanged images to evaluate this phenomenon. Under this intervention, pretrained state-of-the-art detectors, including commercial ones, exhibit a dramatic drop in accuracy (e.g., from 91\% to 55\%), frequently approaching chance level. We provide a theoretical analysis linking this behavior to high-frequency attenuation caused by VAE information bottlenecks. Our findings highlight the need for content-aware detection. Indeed, training on our dataset yields better generalization and localization than standard inpainting. Our dataset and code are publicly available at \url{https://github.com/emirhanbilgic/INP-X}.
\end{abstract}

\section{Introduction}
Inpainting, the task of filling masked regions with plausible content, has become a mainstream image editing capability through diffusion-based tools~\cite{rombach2022high,podell2024sdxl} and commercial services~\cite{deepimage2024}. This accessibility poses significant risks for misinformation and content authenticity~\cite{verdoliva2020media}, making reliable detection of inpainted content critical. Both academic methods~\cite{ojha2023universal,corvi2023diffusion} and commercial APIs~\cite{sightengine2024,hivemoderation2024} report high detection accuracy ($>$90\%) on benchmark datasets.

However, a fundamental question remains: \emph{what are these detectors actually learning?} For inpainting, which synthesizes only a local region, we would expect detectors to focus on the generated content within the edited region. Instead, we show that many state-of-the-art detectors rely primarily on \emph{global artifacts} introduced by the image generation pipeline, exploiting a form of shortcut learning~\cite{geirhos2020shortcut} that undermines their intended purpose. This behavior occurs because diffusion-based inpainting processes the entire image through a VAE encoder-decoder, leading to a subtle yet widespread spectral shift across the image. This global fingerprint provides a trivial detection signal that bypasses the need to identify locally generated content and calls into question the robustness of AI-generated image detectors.
We address this problem with the following contributions:

\begin{itemize} [leftmargin=*, itemsep=2pt, parsep=0pt, topsep=0pt, partopsep=0pt]
	\item We introduce INP-X, illustrated in Figure~\ref{fig_teaser}, that surgically restores original pixels outside the edited region while preserving the generated content within the mask. If detectors truly identify synthetic content, they should spot it in exchanged images since the fake content remains intact. 
		
	\item We provide theoretical analysis and experimental validation linking the observed spectral shift to high-frequency attenuation caused by VAE information bottleneck constraints (\cref{thm_attenuation}), and show that our exchange operation minimizes distributional divergence by eliminating background artifacts (\cref{thm_divergence}).  

	\item We construct a 90K-image benchmark extending Semi-Truths across 4 datasets and 3 inpainting models, with matched real/standard/exchanged triplets.	

	\item We evaluate 11 pretrained detectors and 2 commercial APIs, demonstrating consistent and severe performance degradation under our attack. Notably, commercial systems (HiveModeration, Sightengine) drop from $>$91\% to $\sim$55\% accuracy, approaching random chance.
	
	\item We demonstrate that training detectors on INP-X images forces models to learn local content features rather than global shortcuts, significantly improving both cross-distribution robustness and manipulation localization (mIoU, mAP). However, detecting INP-X images remains challenging, underscoring the relevance of the proposed image-editing operation.
\end{itemize}

Taken together, these contributions suggest that AI-generated image detection remains far from solved. Robust evaluation frameworks should therefore go beyond ``clean'' generated images and include realistic post-edits, such as INP-X, that reveal failure modes and mirror the kinds of modifications likely to appear in real deployments.

\section{Related Work}

\begin{figure}[t!]
    \centering
    \includegraphics[width=\columnwidth]{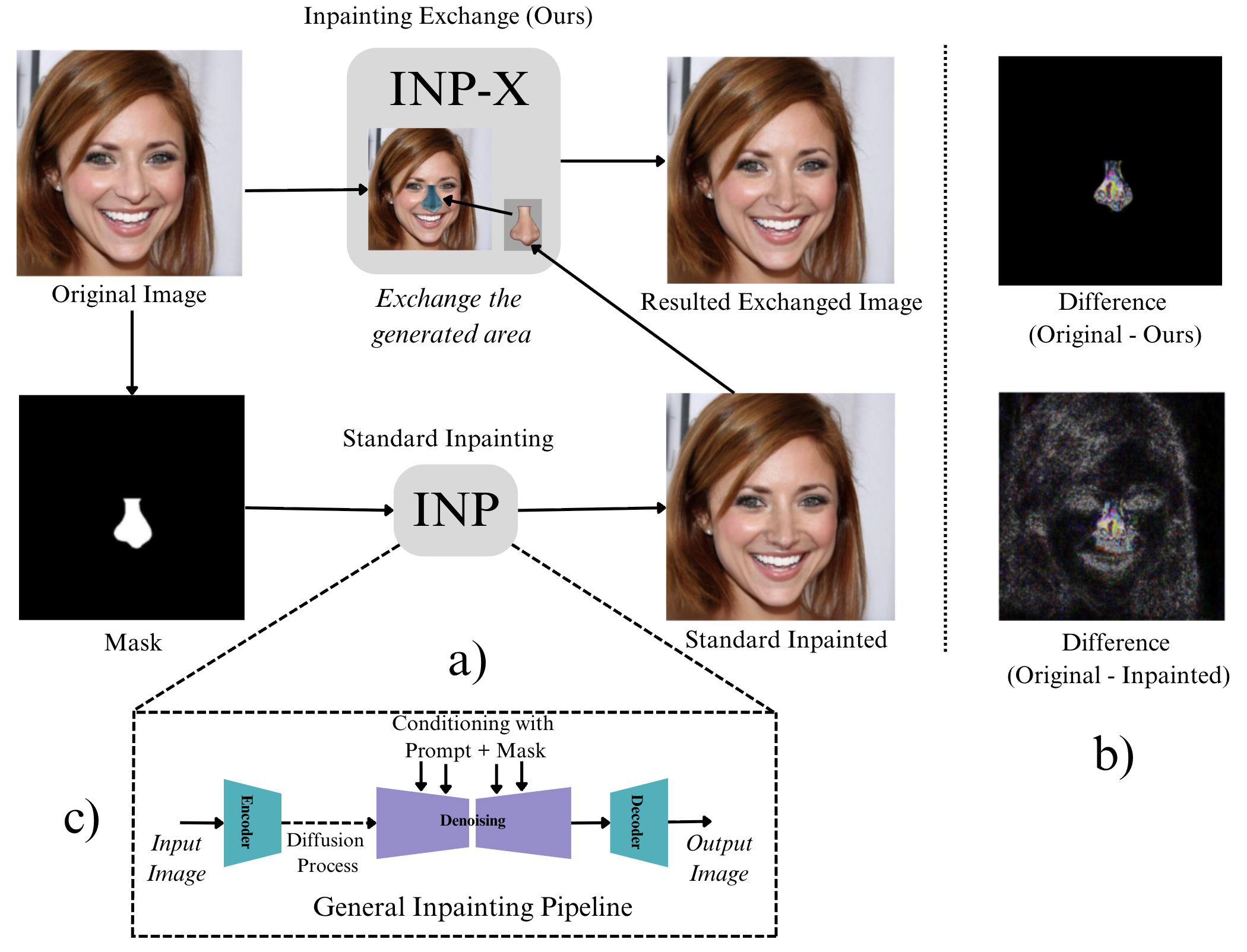}
    \caption{a) Overview of the INP-X pipeline. The unmasked regions of the generated output are replaced with the corresponding regions from the original image, preserving original content while retaining the synthesized masked area. b) The difference between Original-Inpainted and Original-Ours. c) INP: Inpainting Pipeline.}
    \label{fig_teaser}
    \vspace{-1.3em}
\end{figure}

\paragraph{AI-Generated Image Detection.}
The detection of AI-generated images starts with the observation that generative models leave characteristic fingerprints in their outputs. ~\cite{marra2019gans} first identified model-specific fingerprints exploitable for detection. ~\cite{frank2020frequency} revealed that these fingerprints manifest as severe artifacts in the frequency domain, attributable to upsampling operations. ~\cite{wang2020cnn} showed that a classifier trained on a single GAN can generalize to unseen architectures, suggesting CNN-based generators share common statistical flaws. With the rise of diffusion models, ~\cite{corvi2023diffusion} extended this analysis, finding distinct spectral characteristics from the iterative denoising process. ~\cite{wang2023dire} introduced the Diffusion Reconstruction Error, taking advantage of the observation that diffusion-generated images can be reconstructed more accurately than real images. More recent work includes SPAI~\cite{karageorgiou2025spai}, which leverages spectral distributions as invariant patterns, and Artifact Purification Networks~\cite{meng2024artifact}, which separate artifact features from semantic content via frequency-band proposals.

\paragraph{Universal Detectors and Generalization.}
The quest for detectors that generalize across generators has yielded several influential approaches. ~\cite{ojha2023universal} proposes learning generator-agnostic representations in CLIP's feature space, achieving strong cross-generator performance. Similarly,~\cite{cozzolino2023clip,cozzolino2024zeroshot} leverage CLIP embeddings with lightweight classifiers, demonstrating impressive generalization and zero-shot capabilities. However, recent studies question whether CLIP-based detectors rely on artifact cues or semantic shortcuts. ~\cite{chu2025semantics} shows that patch shuffling forces models toward artifact-oriented representations, while~\cite{yan2025nsnet} introduces NULL-space projection to decouple semantic information from forgery-relevant features.

\paragraph{Shortcut Learning and Dataset Biases.}
Despite reported successes, mounting evidence suggests that detector performance may originate in dataset biases rather than genuine detection capabilities. ~\cite{rajan2025staypositive} argue that detectors largely ignore real-image features, instead relying on easy-to-learn shortcuts in fake images. ~\cite{ha2024organic} shows that detectors struggle when generators improve sufficiently.~\cite{yan2025sanity} provides a rigorous sanity check, revealing that many reported gains stem from dataset biases. ~\cite{grommelt2024jpeg} exposes systematic biases related to JPEG compression and resolution. ~\cite{guillaro2025biasfree} introduce a bias-free training paradigm, while~\cite{li2024adversarial} highlight persistent vulnerabilities. Our work provides a controlled intervention that demonstrates the phenomenon in the inpainting setting.

\paragraph{Manipulation, Localization, and Forensic Analysis.}
Beyond binary detection, some work tries to address \emph{localizing} manipulated regions in images. Classical splice detection methods~\cite{farid2009image} analyze inconsistencies in JPEG compression artifacts, lighting, and noise patterns. Photo Response Non-Uniformity (PRNU)~\cite{lukas2006digital} detects authenticity by verifying sensor-specific noise patterns; our Theorem~\ref{thm_attenuation} proves VAE reconstruction attenuates such high-frequency sensor noise, explaining why these methods succeed on standard inpainting (PRNU disrupted globally) but fail on INP-X (PRNU restored in background). Noiseprint~\cite{Cozzolino2019_Noiseprint} extends this by learning camera model fingerprints robust to in-camera processing. The objective of our work is different: we do not propose a new localization method; rather, we demonstrate that \emph{existing general-purpose detectors} fail to localize inpainted content and instead rely on global VAE artifacts. 

\paragraph{Adversarial and Anti-Forensic Methods.}
Prior work on evading AI-generated image detectors has predominantly focused on post-processing perturbations. ~\cite{hou2023evading} demonstrate that adversarial perturbations can evade deepfake detectors, while anti-forensic methods~\cite{anti_forensic_survey} explore various evasion strategies. These approaches share a common paradigm: they \emph{add} perturbations (noise, blur, compression) or \emph{degrade} images to disrupt detection. INP-X differs from these methods in that it removes the global artifact by restoring the original pixels outside the inpainted region rather than adding perturbations. Our goal is diagnostic rather than adversarial. We expose what detectors actually learn, revealing that they may not be detecting the generated content at all.

\paragraph{Benchmarks for Manipulation Detection.}
Evaluating detector robustness requires carefully constructed benchmarks. GenImage~\cite{zhu2023genimage} provides a million-scale dataset for cross-generator evaluation with controlled splits that address JPEG and resolution biases. For partial image manipulation, Semi-Truths~\cite{pal2024semitruths} pairs real images with AI-inpainted counterparts and corresponding masks. By applying INP-X to Semi-Truths, we disentangle these factors, providing a benchmark that isolates the detector's ability to identify \emph{what was generated} rather than \emph{how it was processed}.

\section{Theoretical Analysis}
\label{sec:theory}

\begin{figure}[t]
    \centering
    \begin{minipage}{0.48\columnwidth}
        \centering
        \includegraphics[width=\linewidth]{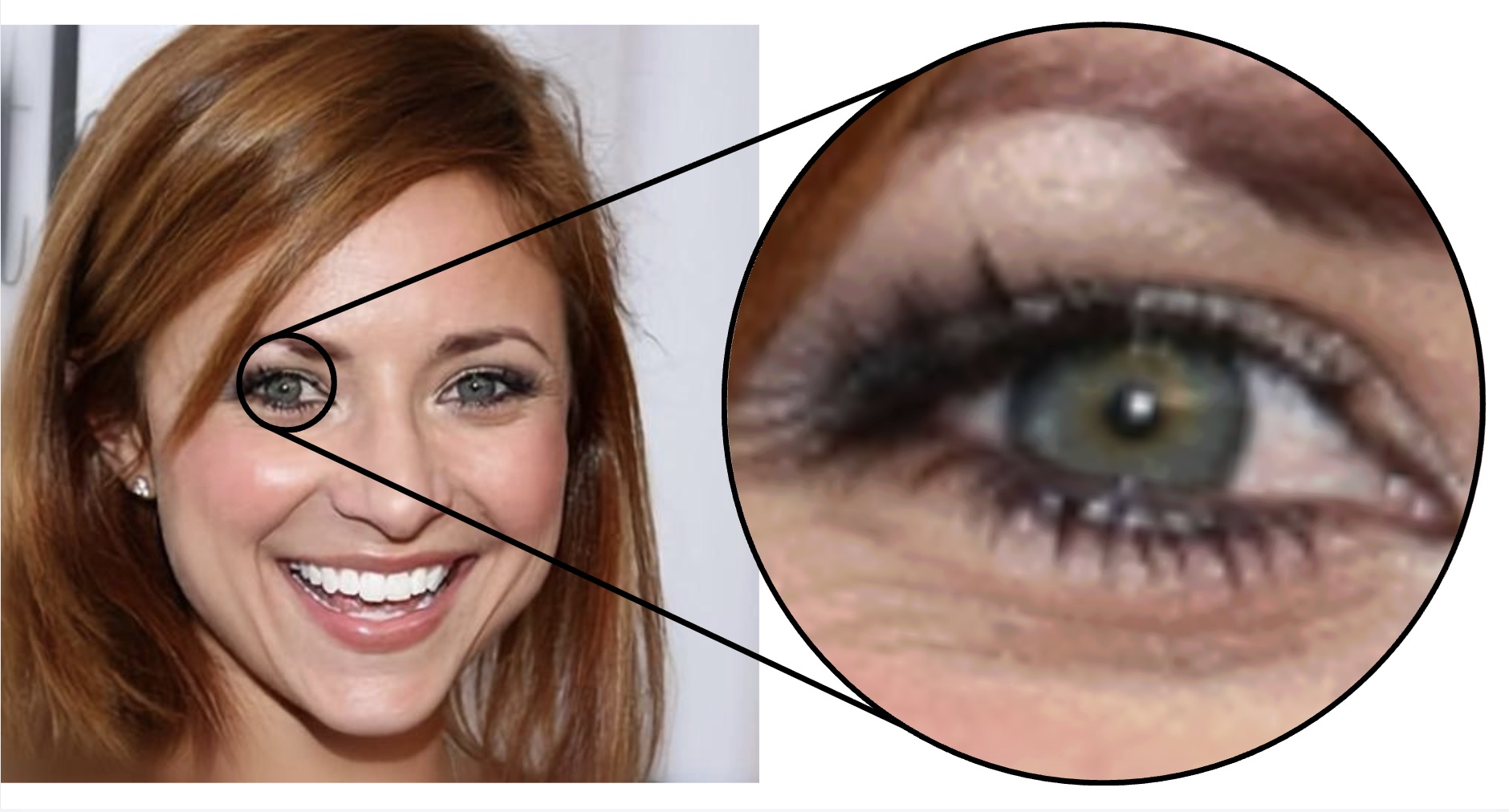}
        \caption*{Original Image}
    \end{minipage}
    \hfill
    \begin{minipage}{0.48\columnwidth}
        \centering
        \includegraphics[width=\linewidth]{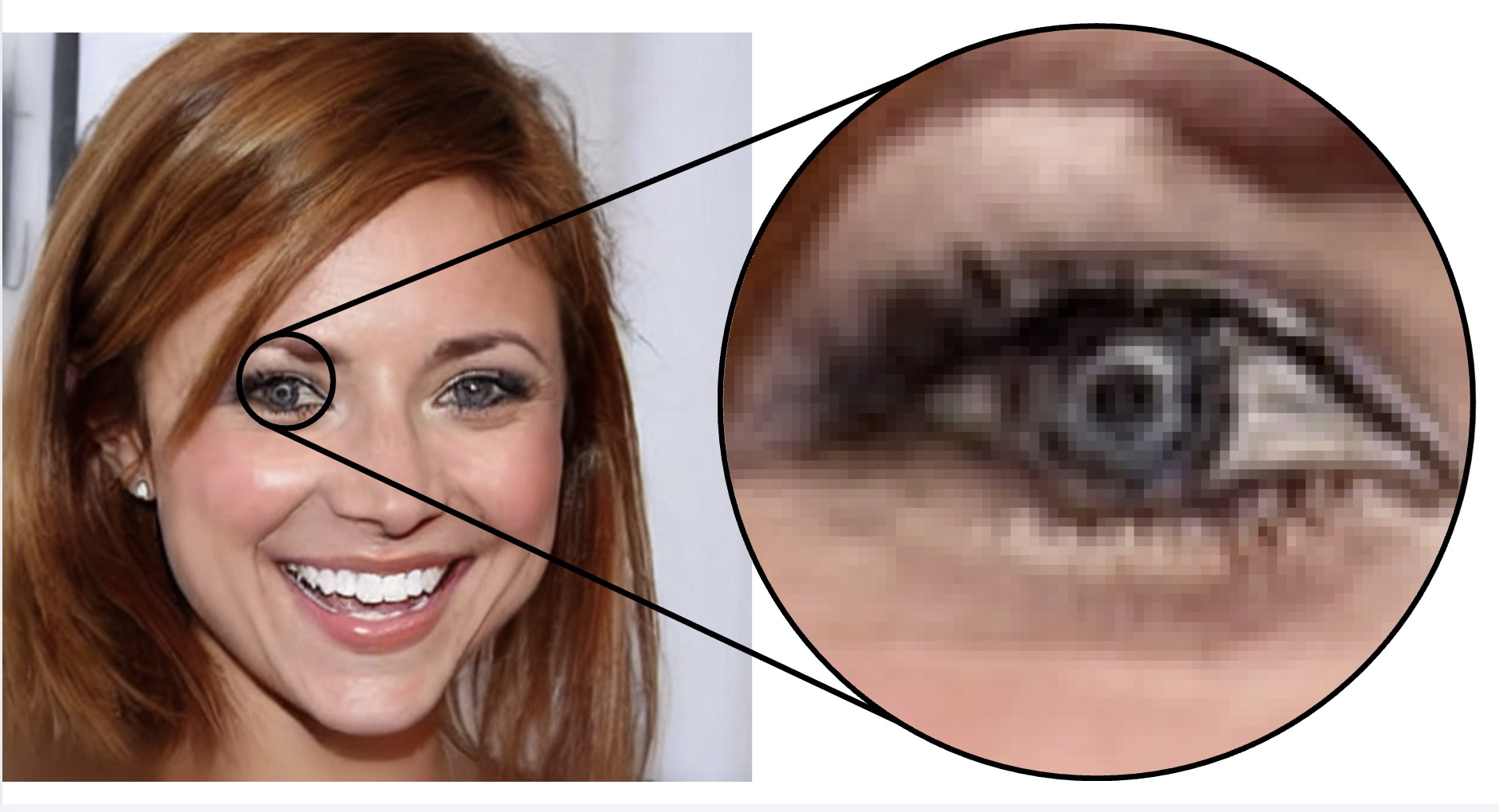}
        \caption*{Standard Inpainting}
    \end{minipage}
    \caption{Impact of standard inpainting on unmasked regions. Even though the mask primarily targets specific areas (nose in this case), the VAE encoding-decoding cycle inherently alters the pixel values of the entire image (unmasked regions, such as the change in the eye shown in the figure), distinguishing them from the original.}
    \label{fig_global_impact}
\end{figure}

We first define latent diffusion inpainting. Then, we formally analyze the source of global artifacts in Latent Diffusion Models (LDMs) and prove why INP-X minimizes the detectability of generated images.

\subsection{Latent Diffusion Inpainting}
Let $x \in \mathbb{R}^{H \times W \times 3}$ be a real image and $M \in \{0,1\}^{H \times W}$ be a binary mask where $1$ denotes the region to inpaint.
Latent Diffusion Models (LDMs) operate in a compressed latent space $\mathcal{Z}$. The process involves an encoder $\mathcal{E}$, a decoder $\mathcal{D}$, and a denoiser $\epsilon_\theta$.
The inpainted image $\tilde{x}$ is generated via:
\begin{equation}
    \tilde{x} = \mathcal{D}(\text{ReverseDiffusion}(\mathcal{E}(x), M)).
\end{equation}
Crucially, even if the latent diffusion process perfectly preserves the unmasked latents, the final image is reconstructed by the decoder $\mathcal{D}(\cdot)$, which affects the entire spatial domain. As illustrated in \cref{fig_global_impact} and observed in ~\cite{hou2025towards}, this process introduces subtle yet pervasive changes even in the unmasked regions, thereby differentiating them from the original source.

\subsection{Spectral Bias in VAE Reconstruction}
Standard LDMs utilize an autoencoder $\mathcal{T} = \mathcal{D} \circ \mathcal{E}$ to map between pixel space $\mathcal{X}$ and latent space $\mathcal{Z}$. We model a real image $x \in \mathcal{X}$ as a superposition of a semantic signal $s$ and stochastic sensor noise $n$ (e.g., photon shot noise, PRNU), such that $x = s + n$, where $n \sim \mathcal{N}(0, \Sigma_{noise})$.

\begin{definition}[Spectral Variance Gap]
Let $\mathcal{F}[x](\omega)$ be the Fourier transform of image $x$ at frequency $\omega$. We define the expected spectral power spectrum as $S_x(\omega) = \mathbb{E}[|\mathcal{F}[x](\omega)|^2]$.
\end{definition}

\begin{theorem}[Variance Contraction in VAEs]
\label{thm_attenuation}
Let $\mathcal{T}$ be an autoencoder trained to minimize a reconstruction objective dominated by the $L_2$ norm (MSE), i.e., $\mathcal{L} = \|x - \mathcal{T}(x)\|^2$. Assuming the latent code $z = \mathcal{E}(x)$ effectively captures the semantic signal $s$ but is approximately independent of the stochastic high-frequency noise $n$ due to the information bottleneck, the reconstruction $\tilde{x} = \mathcal{T}(x)$ satisfies:
\begin{equation}
    S_{\tilde{x}}(\omega) \le S_x(\omega) \quad \forall \omega \in \Omega_{high},
\end{equation}
where $\Omega_{high}$ denotes the frequency band dominated by sensor noise $n$.
\end{theorem}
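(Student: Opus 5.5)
The argument rests on the fact that the MSE-optimal decoder output is a conditional expectation, and conditional expectations can only reduce variance. I would idealize the trained autoencoder as Bayes-optimal for the $L_2$ objective: given the code $z=\mathcal{E}(x)$, the decoder output minimizing $\mathbb{E}\|x-\mathcal{D}(z)\|^2$ is $\mathcal{D}^\star(z)=\mathbb{E}[x\mid z]$. Hence $\tilde{x}=\mathbb{E}[s\mid z]+\mathbb{E}[n\mid z]$. The bottleneck hypothesis (that $z$ is approximately independent of $n$) gives $\mathbb{E}[n\mid z]\approx\mathbb{E}[n]=0$, so $\tilde{x}\approx\mathbb{E}[s\mid z]$.

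Next I would pass to the Fourier domain. The Fourier transform $\mathcal{F}$ is linear and deterministic, so it commutes with conditional expectation: $\mathcal{F}[\tilde{x}](\omega)=\mathbb{E}\bigl[\mathcal{F}[x](\omega)\mid z\bigr]$ for every $\omega$. Conditional Jensen applied to the convex map $u\mapsto|u|^2$ on $\mathbb{C}$ then gives, pointwise, $|\mathbb{E}[\mathcal{F}[x](\omega)\mid z]|^2\le\mathbb{E}[|\mathcal{F}[x](\omega)|^2\mid z]$. Taking expectations yields $S_{\tilde{x}}(\omega)\le S_x(\omega)$. This bound holds for all $\omega$, not only high frequencies; it is the law of total variance in spectral form, since the gap equals $\mathbb{E}\bigl[\mathrm{Var}(\mathcal{F}[x](\omega)\mid z)\bigr]\ge 0$.

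To make the high-frequency statement meaningful (strict and quantitative), I would restrict to $\omega\in\Omega_{high}$, where by definition $S_s(\omega)\ll S_n(\omega)$. Take $s$ and $n$ independent, so that $S_x=S_s+S_n$, with $S_n(\omega)=\mathbb{E}|\mathcal{F}[n](\omega)|^2$ determined by $\Sigma_{noise}$. Independence of $n$ from $z$ gives $S_{\tilde{x}}(\omega)\le S_s(\omega)$, so the gap is at least $S_n(\omega)$. In $\Omega_{high}$ this is essentially the whole of $S_x(\omega)$, which quantifies the near-total attenuation of sensor noise such as PRNU.

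The main obstacle is the gap between the idealization and a real VAE. The decoder is only approximately MSE-optimal: real training also uses perceptual and adversarial losses, and the decoder may be stochastic. Independence of $z$ and $n$ is also only approximate. I would handle this with a perturbation term. Write $\mathcal{D}(z)=\mathbb{E}[x\mid z]+r(z)$ and $\mathbb{E}[n\mid z]=\delta(z)$, and carry $\|\mathcal{F}[r+\delta](\omega)\|_{L^2}$ through the same Jensen step. The inequality then survives whenever this residual is smaller than $\sqrt{S_n(\omega)}$ on $\Omega_{high}$. I would state this as the precise meaning of ``dominated by the $L_2$ norm'' and ``approximately independent'' in the hypothesis.
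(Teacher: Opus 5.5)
Your proposal is correct and follows the paper's argument: the $L_2$-optimal decoder is treated as $\mathbb{E}[x\mid z]$, and the law of total variance puts the noise power into the residual $\mathbb{E}[\mathrm{Var}(x\mid z)]$. Applying this per Fourier coefficient via linearity and conditional Jensen is cleaner than the paper's version. It justifies the pixel-to-frequency step the paper only asserts, shows the weak inequality holds at every $\omega$, and uses the independence assumptions only to make the gap strict and at least $S_n(\omega)$, which is essentially all of $S_x(\omega)$ on $\Omega_{high}$.

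One small fix to your optional perturbation remark: the condition that secures the inequality for every direction of the residual is $\|\mathcal{F}[r](\omega)\|_{L^2}\le\sqrt{S_x(\omega)}-\sqrt{S_{\mathbb{E}[x\mid z]}(\omega)}$. This follows from Cauchy--Schwarz on the cross term and is sharp for $r$ aligned with $\mathbb{E}[x\mid z]$. In the regime your hypothesis targets ($z$ capturing $s$, so $S_{\mathbb{E}[x\mid z]}=S_s$), it equals $\sqrt{S_s+S_n}-\sqrt{S_s}$, which is strictly below $\sqrt{S_n}$ whenever $S_s(\omega)>0$. So your threshold $\sqrt{S_n(\omega)}$ is right only in the limit $S_s/S_n\to 0$.
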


\begin{proof}
The $L_2$-optimal reconstruction approximates the conditional expectation of the posterior: $\tilde{x} \approx \mathbb{E}[x \mid z]$.
Using the Law of Total Variance, we decompose the variance of the real data:
\begin{equation}
    \text{Var}(x) = \text{Var}(\mathbb{E}[x|z]) + \mathbb{E}[\text{Var}(x|z)].
\end{equation}
The first term, $\text{Var}(\mathbb{E}[x|z])$, represents the variance captured by the reconstruction $\tilde{x}$. The second term, $\mathbb{E}[\text{Var}(x|z)]$, represents the irreducible error (information lost in the bottleneck).
Since sensor noise $n$ is stochastic and non-semantic, it is not encoded in $z$. Therefore, the variance associated with $n$ falls into the residual term $\mathbb{E}[\text{Var}(x|z)]$.
In the frequency domain, since $n$ is high-frequency dominant, the power spectrum of the reconstruction $S_{\tilde{x}}(\omega)$ must be strictly lower than $S_x(\omega)$ for frequencies where noise dominates signal.
\end{proof}

\begin{remark}[Extension to Modern VAEs]
\label{rem:modern_vae}
Modern VAEs employ hybrid losses (perceptual~\cite{johnson2016perceptual} + adversarial + $L_2$), which violate the strict MSE-dominance assumption of \cref{thm_attenuation}. However, high-frequency attenuation persists in practice for two reasons: (1) perceptual losses operate on low-pass VGG features that inherently ignore sensor noise, and (2) the spatial compression (typically $8\times$ downsampling factor~\cite{rombach2022high}, yielding $64\times$ reduction in spatial dimensions) imposes a fundamental information-theoretic limit on encoding non-semantic high-frequency content. Our empirical validation (\cref{fig_vae_verification}) confirms this using Stable Diffusion 1.4's VAE, and \cref{sec:appendix_vae_advanced} extends the analysis to SDXL and FLUX.1, showing the phenomenon persists across architectures.
\end{remark}
\paragraph{Empirical Validation: VAE as the Source of Artifacts.}
To rigorously validate that the observed global artifacts stem specifically from the VAE encoding-decoding process (as predicted by Theorem \ref{thm_attenuation}), we conduct a controlled experiment using the pre-trained Variational Autoencoder of Stable Diffusion v1.4. We isolate the VAE component and apply strict encoding and decoding to real images, without any diffusion or inpainting steps.
We analyze the correlation between three key signals: (1) the inpainting difference (standard inpainting - original), (2) the image high-frequency content, and (3) the pure VAE reconstruction loss. \cref{fig_vae_verification} visualizes these components, showing a striking spatial structural similarity between the VAE reconstruction error and the artifacts observed in standard inpainting.

\begin{figure*}[t]
    \centering
    \includegraphics[width=\textwidth]{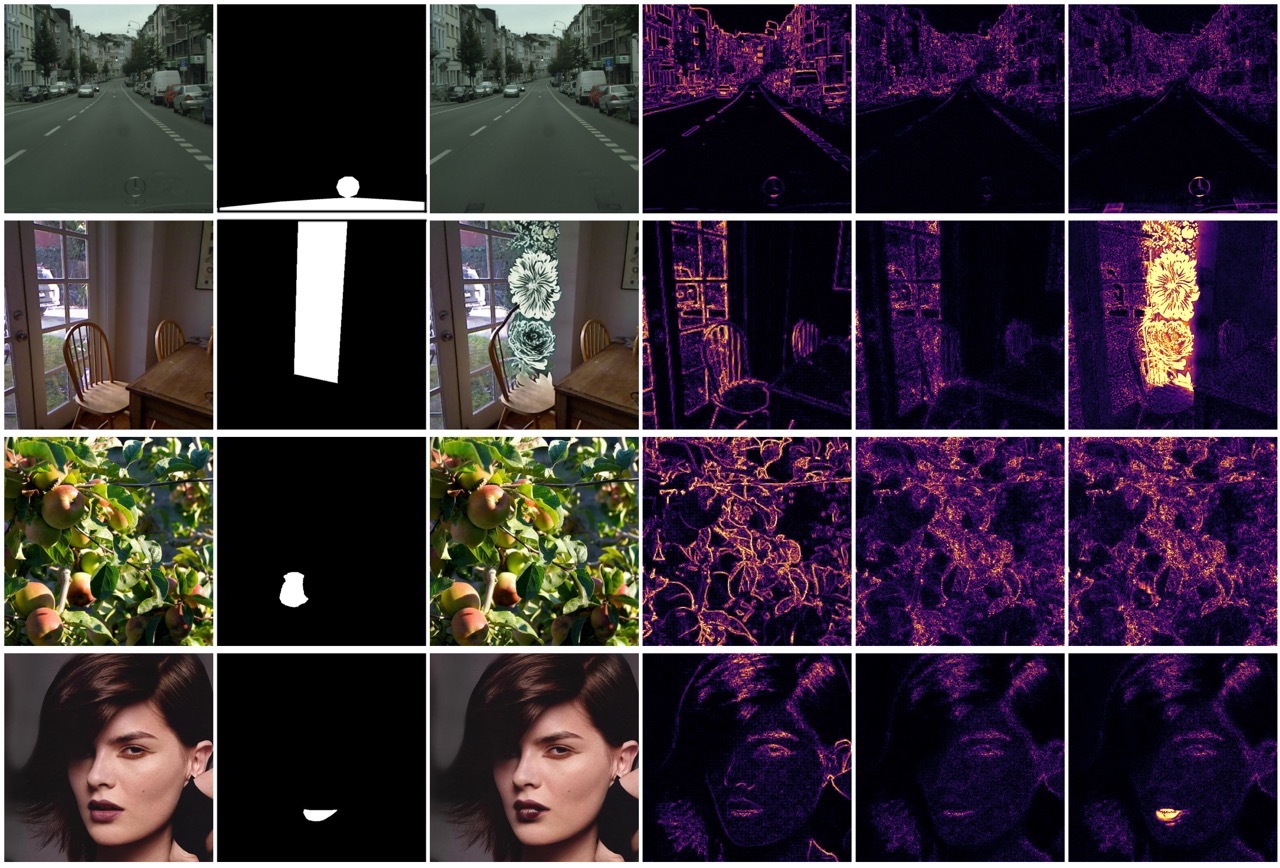}
    \caption{Verification of VAE-induced artifacts. Columns from left to right: (1) Original Image, (2) Mask, (3) Inpainted Result, (4) High-Frequency Filters of the Original Image, (5) VAE Reconstruction Artifacts (decoding encoded original), (6) Difference Map (Inpainted - Original). Note the strong structural correlation between the VAE artifacts and the final inpainting difference, confirming the VAE as the primary source of global noise.}
    \label{fig_vae_verification}
\end{figure*}

\begin{figure*}[t]
    \centering
    \includegraphics[width=0.9\linewidth]{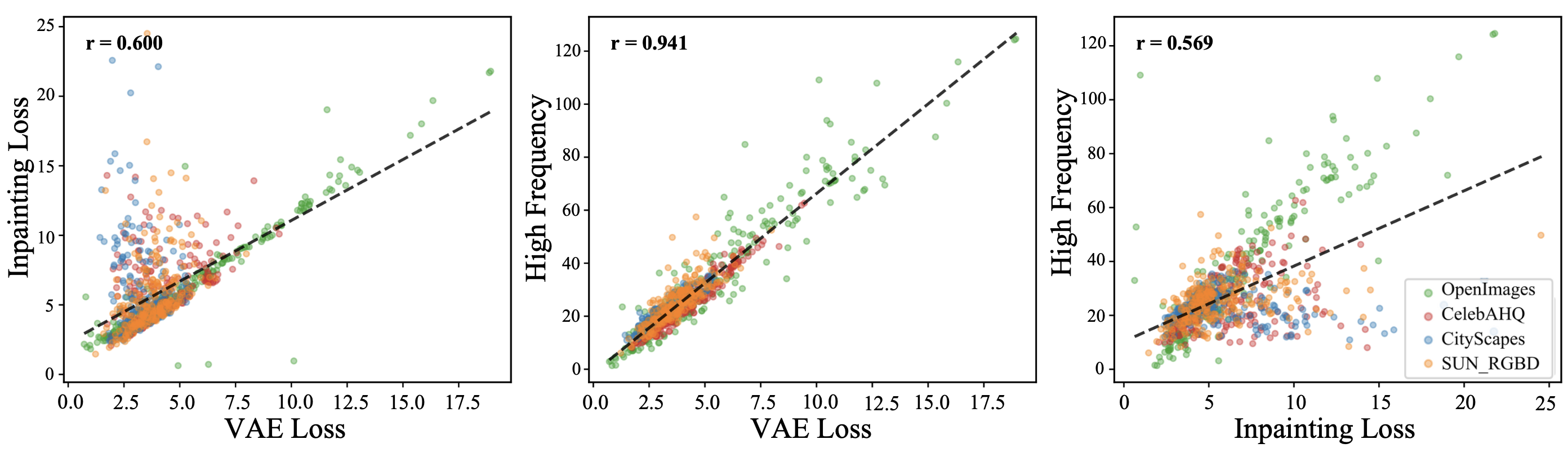}
    \caption{Scatter plots showing strong correlations between VAE reconstruction loss and inpainting artifacts.}
    \label{fig_scatter_plots}
\end{figure*}

The quantitative analysis confirms this visual intuition (\cref{fig_scatter_plots}). We computed Pearson ($r$) and Spearman ($\rho$) correlation coefficients at the image level (mean error) and pixel level. At the image level, we observe a very strong correlation between VAE Loss and High Frequency content ($r=0.941$), supporting our claim that the VAE struggles to reconstruct high-frequency details. There is also a significant correlation between VAE Loss and Inpainting Loss ($r=0.600$), indicating that VAE error is a major component of the total inpainting error.

At the pixel level, we compute per-image correlations and report their distribution across the dataset (mean $\pm$ std): VAE$\leftrightarrow$Inpaint (Pearson $r{=}0.45{\pm}0.23$, Spearman $\rho{=}0.55{\pm}0.21$), VAE$\leftrightarrow$HighFreq ($r{=}0.52{\pm}0.08$, $\rho{=}0.54{\pm}0.10$), and Inpaint$\leftrightarrow$HighFreq ($r{=}0.33{\pm}0.15$, $\rho{=}0.44{\pm}0.15$). We note that these pixel-level correlations are computed over the entire image, including the masked region, where diffusion-generated content introduces additional variance uncorrelated with VAE reconstruction error, partially attenuating the measured correlation. When restricted to unmasked pixels, we expect even stronger correspondence.

These results empirically validate that the ``shaving off'' of the spectral power spectrum is indeed a direct consequence of the VAE's compression mechanism, which preferentially filters out high-frequency sensor noise that holds no semantic value but is crucial for forensic authenticity.

\subsection{INP-X and Divergence Minimization}
We analyze the detectability of the image using the Kullback-Leibler (KL) divergence between the distribution of real images $P_{real}$ and the manipulated distribution $P_{manip}$. A lower divergence implies the distributions are harder to distinguish.

Let the image domain $\Omega$ be partitioned into the background $\Omega_{bg}$ (where mask $M=0$) and the foreground $\Omega_{fg}$ (where $M=1$). Let $x_{bg}$ and $x_{fg}$ denote the pixel content in these regions.

\begin{theorem}[Divergence Reduction via Exchange]
\label{thm_divergence}
Let $P_{std}$ be the distribution of standard inpainted images and $P_{ex}$ be the distribution of images generated via Inpainting Exchange. Under the assumption that standard inpainting introduces a non-zero spectral shift in the background (from \cref{thm_attenuation}), we have:
\begin{equation}
    D_{KL}(P_{real} || P_{ex}) < D_{KL}(P_{real} || P_{std}).
\end{equation}
\end{theorem}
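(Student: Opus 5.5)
The natural route is the chain rule for KL divergence applied to the partition $\Omega=\Omega_{bg}\cup\Omega_{fg}$. For any joint law on $(x_{bg},x_{fg})$ we can write
\begin{equation*}
D_{KL}(P_{real}\,\|\,Q) = D_{KL}\big(P_{real}(x_{bg})\,\|\,Q(x_{bg})\big) + \mathbb{E}_{x_{bg}\sim P_{real}}\Big[D_{KL}\big(P_{real}(x_{fg}\mid x_{bg})\,\|\,Q(x_{fg}\mid x_{bg})\big)\Big].
\end{equation*}
I will apply this with $Q=P_{ex}$ and with $Q=P_{std}$ and compare the two resulting sums term by term.

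\textbf{Background term.} By construction of INP-X the background is copied from the original image, so $P_{ex}(x_{bg})=P_{real}(x_{bg})$ exactly. The first term therefore vanishes for $P_{ex}$. For $P_{std}$, the background is $\mathcal{T}(x)$ restricted to $\Omega_{bg}$. \cref{thm_attenuation} gives $S_{\tilde{x}}(\omega)\le S_x(\omega)$ on $\Omega_{high}$, and the hypothesis of the present theorem strengthens this to a non-zero shift. Two laws with different second moments at some frequency cannot be equal, so $P_{std}(x_{bg})\neq P_{real}(x_{bg})$. By positivity of KL for distinct distributions, the background term is therefore strictly positive for $P_{std}$.

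\textbf{Foreground term.} Here I need
\begin{equation*}
\mathbb{E}_{x_{bg}}\Big[D_{KL}\big(P_{real}(x_{fg}\mid x_{bg})\,\|\,P_{ex}(x_{fg}\mid x_{bg})\big)\Big] \le \mathbb{E}_{x_{bg}}\Big[D_{KL}\big(P_{real}(x_{fg}\mid x_{bg})\,\|\,P_{std}(x_{fg}\mid x_{bg})\big)\Big].
\end{equation*}
INP-X keeps exactly the synthesized pixels $\tilde{x}_{fg}$, so the marginal foreground content is identical under $P_{ex}$ and $P_{std}$. The difference lies only in the conditioning. Under $P_{std}$ the foreground is conditioned on the reconstructed background $\tilde{x}_{bg}$; under $P_{ex}$ it is conditioned on the real background $x_{bg}$, which is the true generating input, since $\tilde{x}_{fg}$ is a function of $\mathcal{E}(x)$, $M$ and the diffusion noise.

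I will first try to sidestep this subtlety with an explicit modelling assumption: the generated foreground depends on the background only through the semantic content captured by $z$, which is shared by $x_{bg}$ and $\tilde{x}_{bg}$. Under that assumption the two conditional kernels coincide as functions of the semantic part, and the foreground terms are equal.

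\textbf{Conclusion and main obstacle.} Given equal foreground terms, the strict inequality follows directly: the $P_{ex}$ sum is $0+C$ and the $P_{std}$ sum is $(\text{positive})+C$. The main obstacle is precisely the foreground comparison. Without an assumption like the one above, the conditional law of $\tilde{x}_{fg}$ given a real versus a reconstructed background could in principle differ in either direction, for instance through a seam mismatch at $\partial\Omega_{fg}$ in the exchanged image. I would state the assumption explicitly as a hypothesis, noting that it matches the paper's premise that the only systematic difference is the global spectral shift. As a fallback, I would argue by data processing, treating the restriction to $\Omega_{fg}$ plus the known background as a channel. A further technicality is that the KL must be finite; for continuous images this can be ensured by convolving with a small Gaussian, which preserves the strict inequality.
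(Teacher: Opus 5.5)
Your decomposition and your handling of the background term are exactly the paper's proof. You use the chain rule over $(x_{bg},x_{fg})$, get a vanishing first term for $P_{ex}$ because the background is copied, and get a strictly positive first term for $P_{std}$ from the spectral-shift hypothesis. The paper's proof stops there: it says the INP-X divergence ``is reduced to the conditional divergence of the foreground'' and never compares that term with the one for $P_{std}$. So the obstacle you isolate is a real hole that the paper shares, and it cannot be closed from the stated hypothesis alone. A two-pixel toy straddling $\partial M$ shows this. Take $u\sim\mathcal{N}(0,1)$, $n\sim\mathcal{N}(0,0.1)$, $m\sim\mathcal{N}(0,0.05)$ and $\delta_1,\delta_2\sim\mathcal{N}(0,10^{-3})$, all independent, with:
\begin{itemize}
\item real image $(u+n,\,u+n+\delta_1)$,
\item standard output $(u+m,\,u+m+\delta_2)$,
\item exchanged image $(u+n,\,u+m+\delta_2)$.
\end{itemize}
The background variance drops from $1.1$ to $1.05$, a non-zero shift. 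Yet $D_{KL}(P_{real}\|P_{std})\approx 5.5\times10^{-4}$, while $D_{KL}(P_{real}\|P_{ex})\approx 2.0$. The co-decoded output keeps the local coupling across the boundary and the exchange breaks it; this is the seam effect you anticipated.

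The same toy shows that your patch does not follow from the modelling assumption you state. There the generated foreground depends on $x$ only through the semantic variable $u$ plus fresh texture, and $u$ is shared by $x_{bg}$ and $\tilde x_{bg}$. Even so, $P_{std}(x_{fg}\mid x_{bg})=\mathcal{N}(x_{bg},10^{-3})$, which coincides with $P_{real}(x_{fg}\mid x_{bg})$, whereas $P_{ex}(x_{fg}\mid x_{bg})=\mathcal{N}(x_{bg}/1.1,\,0.142)$. The fact that $\tilde x_{fg}$ is a function of $\mathcal{E}(x)$, $M$ and the diffusion noise describes conditioning on the \emph{input} background, which is what $P_{ex}$ does. $P_{std}$ instead conditions on $\tilde x_{bg}$, which is decoded from the same latent and the same noise realization as $\tilde x_{fg}$, so near $\partial M$ it carries far more than semantic information.

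The hypothesis you need must therefore be imposed directly on the conditional terms. For example: $\mathbb{E}_{x_{bg}}[D_{KL}(P_{real}(x_{fg}|x_{bg})\|P_{ex}(x_{fg}|x_{bg}))]$ is finite and at most the corresponding quantity for $P_{std}$. With that added, your argument (and the paper's) is complete, but this assumption postulates the seam effect away rather than deriving anything about it.

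Your fallbacks do not remove the need for it:
\begin{itemize}
\item Gaussian smoothing keeps the background terms at zero versus positive, since Gaussian convolution is injective. But it does not by itself carry a foreground equality over, because the posterior of the clean background given a noisy one differs under $P_{real}(x_{bg})$ and $P_{std}(x_{bg})$.
\item The data-processing route needs a single channel that maps $P_{std}$ to $P_{ex}$ while fixing $P_{real}$. No such channel can exist in the toy above, and data processing would in any case give only a non-strict inequality.
\end{itemize}
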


\begin{proof}
We utilize the Chain Rule for KL Divergence to decompose the total divergence over the joint distribution of background and foreground pixels:
\begin{equation}
    \begin{aligned}
    D_{KL}(P || Q) &= D_{KL}(P(x_{bg}) || Q(x_{bg})) \\
   &+ \mathbb{E}_{x_{bg} \sim P} [D_{KL}(P(x_{fg}|x_{bg}) || Q(x_{fg}|x_{bg}))].
    \end{aligned}
\end{equation}

\textbf{Case 1: Standard Inpainting ($Q = P_{std}$).}
The background is reconstructed via the VAE. As per \cref{thm_attenuation}, the marginal distribution $Q(x_{bg})$ differs from $P(x_{bg})$ due to spectral attenuation. Thus, the first term $D_{KL}(P(x_{bg}) || P_{std}(x_{bg})) > 0$. Since $\Omega_{bg}$ typically covers the majority of the image, this term significantly impacts total divergence.

\textbf{Case 2: Inpainting Exchange ($Q = P_{ex}$).}
By definition, the background pixels are copied exactly from the real image: $x^{ex}_{bg} = x_{bg}$. Therefore, the marginal distributions are identical: $P_{ex}(x_{bg}) \equiv P_{real}(x_{bg})$.
This forces the first term to zero:
\begin{equation}
    D_{KL}(P_{real}(x_{bg}) || P_{ex}(x_{bg})) = 0.
\end{equation}
Consequently, the total divergence for our method is reduced to the conditional divergence of the foreground, whereas standard inpainting suffers from divergence across the entire spatial domain.
\end{proof}

\section{Experiments and Analysis}

\subsection{Dataset Construction}
We extend the Semi-Truths dataset to construct a 90K-image benchmark across 4 datasets: CelebA-HQ~\cite{karras2018progressive}, CityScapes~\cite{Cordts2016Cityscapes}, OpenImages~\cite{kuznetsova2020open}, and SUN-RGBD~\cite{song2015sun}.
This combination ensures topical diversity and supports a robust evaluation. 
The dataset includes 30k samples for each subset: (1) real images ($x$),  standard inpainted images ($\tilde{x}$), exchanged inpainted images ($x^{ex}$).
All images are paired with masks, enabling a controlled evaluation of detector behavior. 
Inpainting is performed using three models: Kandinsky 2.2~\cite{razzhigaev2023kandinsky}, OpenJourney~\cite{openjourney2023}, and Stable Diffusion v1.4~\cite{rombach2022high}. 

\subsection{Experimental Setup}
We evaluate the capacity of AI-generated image detectors to discriminate between real and images edited with standard and inpainting exchange, respectively. 
First, we test 11 open-source pretrained detectors and two commercial APIs~\footnote{We tested commercial APIs with 1,000 each due to rate limits.} in a binary classification setting.
Second, we fine-tune four detectors using subsets of $\tilde{x}$ and $x^{ex}$ data for training, and test both inpainting localization and classification. 
These tests enable a comprehensive evaluation of the detectors' behavior when presented with INP-X-edited images.
We complement them with a thorough analysis of factors influencing performance, including robustness to other corruptions, spectral analysis of inpainting artifacts, and the impact of mask size and dataset. 

\begin{table}[t]
\centering
\caption{Evaluation of pretrained detectors classification performance for real vs. standard inpainting (INP) and Inpainting Exchange (INP-X), respectively.$^\dagger$}
\label{tab_all_detectors}
\vskip 0.1in
\resizebox{\columnwidth}{!}{
\begin{small}
\begin{sc}
\begin{tabular}{llccccc}
\toprule
Detector & Data & Acc & AUC & Prec & Rec & F1 \\
\midrule
\multirow{2}{*}{\shortstack[l]{SPAI\\\tiny\cite{karageorgiou2024any}}}
& INP & 0.661 & 0.743 & 0.638 & 0.744 & 0.687 \\
& \textbf{INP-X} & \textbf{0.542} & \textbf{0.567} & \textbf{0.546} & \textbf{0.506} & \textbf{0.525} \\
\midrule
\multirow{2}{*}{\shortstack[l]{CO-SPY\\\tiny\cite{Cheng_2025_CVPR}}}
& INP & 0.549 & 0.649 & 0.638 & 0.229 & 0.337 \\
& \textbf{INP-X} & \textbf{0.532} & \textbf{0.594} & \textbf{0.613} & \textbf{0.224} & \textbf{0.328} \\
\midrule
\multirow{2}{*}{\shortstack[l]{DNF\\\tiny\cite{zhang2023diffusion}}}
& INP & 0.710 & 0.779 & 0.724 & 0.697 & 0.710 \\
& \textbf{INP-X} & \textbf{0.604} & \textbf{0.643} & \textbf{0.607} & \textbf{0.602}& \textbf{0.604}\\
\midrule
\multirow{2}{*}{\shortstack[l]{Synthbuster\\\tiny\cite{bammey2024synthbuster}}}
& INP & 0.615 & 0.681 & 0.586 & 0.783 & 0.670 \\
& \textbf{INP-X} & \textbf{0.578} & \textbf{0.619} & \textbf{0.559} & \textbf{0.741} & \textbf{0.637} \\
\midrule
\multirow{2}{*}{\shortstack[l]{Corvi2023\\\tiny\cite{corvi2023diffusion}}}
& INP & 0.942 & 0.989 & 0.995 & 0.890 & 0.939 \\
& \textbf{INP-X} & \textbf{0.554} & \textbf{0.519} & \textbf{0.959} & \textbf{0.114} & \textbf{0.203} \\
\midrule
\multirow{2}{*}{\shortstack[l]{CLIP 10\\\tiny\cite{cozzolino2023clip}}}
& INP & 0.556 & 0.845 & 0.944 & 0.119 & 0.211 \\
& \textbf{INP-X} & \textbf{0.509} & \textbf{0.606} & \textbf{0.778} & \textbf{0.025} & \textbf{0.048} \\
\midrule
\multirow{2}{*}{\shortstack[l]{CLIP 10+\\\tiny\cite{cozzolino2023clip}}}
& INP & 0.658 & 0.844 & 0.895 & 0.358 & 0.512 \\
& \textbf{INP-X} & \textbf{0.563} & \textbf{0.673} & \textbf{0.800} & \textbf{0.169} & \textbf{0.279} \\
\midrule
\multirow{2}{*}{\shortstack[l]{UnivFD\\\tiny\cite{ojha2023universal}}}
& INP & 0.552 & 0.632 & 0.746 & 0.159 & 0.262 \\
& \textbf{INP-X} & \textbf{0.550} & \textbf{0.592} & \textbf{0.739} & \textbf{0.154} & \textbf{0.254} \\
\midrule
\multirow{2}{*}{\shortstack[l]{DeFake\\\tiny\cite{sha2023defake}}}
& INP & 0.581 & 0.673 & 0.649 & 0.351 & 0.456 \\
& \textbf{INP-X} & \textbf{0.536} & \textbf{0.589} & \textbf{0.579} & \textbf{0.262} & \textbf{0.360} \\
\midrule
\multirow{2}{*}{\shortstack[l]{Cross-EfficientViT\\\tiny\cite{coccomini2022crossefficientvit}}}
& INP & 0.502 & 0.511 & 0.524 & 0.052 & 0.094 \\
& \textbf{INP-X} & \textbf{0.502} & \textbf{0.508} & \textbf{0.517} & \textbf{0.050} & \textbf{0.092} \\
\midrule
\multirow{2}{*}{\shortstack[l]{CNNDetection\\ \tiny\cite{wang2020cnn}}}
& INP & 0.503 & 0.506 & 0.838 & 0.007 & 0.013 \\
& \textbf{INP-X} & \textbf{0.501} & \textbf{0.502} & \textbf{0.745} & \textbf{0.004} & \textbf{0.008} \\
\bottomrule
\end{tabular}
\end{sc}
\end{small}}
{\footnotesize $^\dagger$ Differences between INP and INP-X are significant ($p \le 0.05$, paired t-test).}
\vspace{-1.0em}
\end{table}

\subsection{Evaluation of Pretrained Detectors}

The performance of the detectors tested in~\cref{tab_all_detectors} varies widely for standard inpainting, with several performing at near-random levels, and Corvi2023 being a notable exception.
The INP-X editing significantly degrades performance across the board, with accuracy scores ranging from random-level to only $0.604$ for DNF. 
Notably, purely frequency-based methods, such as Corvi2023, show significant drops, validating our hypothesis that they rely on the global spectral artifacts we eliminate.
The individual metric analysis shows that performance degradation is attributable to low recall in a majority of cases.
The tested detectors use different deep learning architectures and training strategies, yet all fail to detect INP-X. 

In~\cref{tab_commercial}, we evaluate two leading commercial detection APIs. 
~\cite{li2024adversarial} identified Sightengine as a top-performing commercial detector, while ~\cite{ha2024organic} highlights HiveModeration as achieving state-of-the-art accuracy on organic AI-generated images.
Both detectors perform well on standard inpainting, with scores close to the best ones reported in ~\cref{tab_commercial}.
However, they too suffer a catastrophic drop in performance when tested with INP-X edited. 
This result confirms the hypothesis derived in \cref{sec:theory}: commercial models likely use high-frequency noise analysis, which is globally disrupted by the VAE in standard inpainting but is restored by our exchange method.
Globally, the results in Tables~\ref{tab_all_detectors} and~\ref{tab_commercial} show that INP-X is very challenging for pretrained detectors. 

\begin{table}[h]
\centering
\caption{Commercial API Results (N=1000). Comparison of Standard Inpainting vs. our Inpainting Exchange.$^*$}
\label{tab_commercial}
\vskip 0.1in
\resizebox{\columnwidth}{!}{
\begin{small}
\begin{sc}
\begin{tabular}{llccccc}
\toprule
Detector & Data & Acc & AUC & Prec & Recall & F1 \\
\midrule
\multirow{2}{*}{Hive Moderation}
& INP & 0.914 & 0.921 & 0.993 & 0.834 & 0.906 \\
& \textbf{INP-X} & \textbf{0.548} & \textbf{0.578} & \textbf{0.944} & \textbf{0.102} & \textbf{0.184} \\
\midrule
\multirow{2}{*}{Sightengine}
& INP & 0.926 & 0.935 & 0.989 & 0.862 & 0.921 \\
& \textbf{INP-X} & \textbf{0.550} & \textbf{0.588} & \textbf{0.923} & \textbf{0.120} & \textbf{0.212} \\
\bottomrule
\end{tabular}
\end{sc}
\end{small}}
{\footnotesize $^*$Differences are significant ($p \le 0.05$, bootstrap test, 1000 samples).}
\vspace{-1.3em}
\end{table}

\subsection{Fine-tuned Detectors}
We trained four detectors using pretrained models with CLIP ViT B/32, ViT B/16, EfficientNet, and ResNet-50 backbones to detect standard inpainted and INP-X images. 
We provide training details in \cref{sec:appendix_impl}.

\begin{table}[t]
\centering
\caption{Evaluation of detectors trained with standard inpainting and INP-X. We report both classification metrics (Accuracy, AUC, F1) and localization metrics (mIoU, mAP).$^\dagger$}

\label{tab_model_robustness_localization}
\vskip 0.1in
\resizebox{0.49\textwidth}{!}{
\begin{small}
\begin{sc}
\begin{tabular}{lllcccccc}
\toprule
& & & \multicolumn{3}{c}{Classification} & \multicolumn{2}{c}{Localization} \\
\cmidrule(lr){4-6} \cmidrule(lr){7-8}
Arch. & Train & Test & Acc & AUC & F1 & mIoU & mAP \\
\midrule
\multirow{4}{*}{\shortstack[l]{CLIP\\ViT B/32}}
& INP & INP & 0.984 & 0.998 & 0.984 & 0.373 & 0.201 \\
& INP & INP-X & 0.598 & 0.825 & 0.350 & 0.376 & 0.205 \\
& INP-X & INP-X & 0.753 & 0.890 & 0.795 & 0.380$^*$ & 0.212$^*$ \\
& INP-X & INP & 0.688 & 0.831 & 0.726 & 0.382$^*$ & 0.224$^*$ \\
\midrule
\multirow{4}{*}{\shortstack[l]{ViT\\B/16}}
& INP & INP & 0.819 & 0.908 & 0.826 & 0.393 & 0.226 \\
& INP & INP-X & 0.603 & 0.689 & 0.519 & 0.393 & 0.227 \\
& INP-X & INP-X & 0.630 & 0.709 & 0.701 & 0.405$^*$ & 0.230$^*$ \\
& INP-X & INP & 0.644 & 0.742 & 0.716 & 0.404$^*$ & 0.229$^*$ \\
\midrule
\multirow{4}{*}{\shortstack[l]{EfficientNet}}
& INP & INP & 0.952 & 0.991 & 0.952 & 0.466 & 0.369 \\
& INP & INP-X & 0.567 & 0.749 & 0.296 & 0.476 & 0.352 \\
& INP-X & INP-X & 0.672 & 0.771 & 0.698 & 0.486$^*$ & 0.408$^*$ \\
& INP-X & INP & 0.676 & 0.821 & 0.702 & 0.479$^*$ & 0.392$^*$ \\
\midrule
\multirow{4}{*}{ResNet-50}
& INP & INP & 0.973 & 0.996 & 0.973 & 0.451 & 0.308 \\
& INP & INP-X & 0.557 & 0.682 & 0.237 & 0.463 & 0.350 \\
& INP-X & INP-X & 0.712 & 0.838 & 0.750 & 0.472$^*$ & 0.382$^*$ \\
& INP-X & INP & 0.745 & 0.858 & 0.785 & 0.463$^*$ & 0.353$^*$ \\
\bottomrule
\end{tabular}
\end{sc}
\end{small}}
{\footnotesize $^\dagger$Results averaged over 3 runs. $^*$Localization improvement over INP-trained baseline is modest but significant ($p \le 0.05$, t-test).}
\end{table}

Table~\ref{tab_model_robustness_localization} results reveal several findings. 
First, the \emph{fine-tuned detectors have better performance} than their pretrained counterparts tested in Table~\ref{tab_all_detectors}. 
However, detecting INP-X-edited images remains difficult even for specifically trained detectors, with the best accuracy score at $0.753$. 
Second, \emph{standard inpainting was learned much more easily}, achieving high classification performance on the standard inpainting test set, particularly with CLIP and ResNet-50 backbones. 
This result suggests that the global artifact is a trivial signal for deep networks to fit.
Third, we find \emph{asymmetric transferability}. 
Models trained on standard inpainting exhibit catastrophic performance drops when evaluated on INP-X, with the best accuracy of only $0.603$ for the ViT B/16 backbone. 
In contrast, models trained on Inpainting Exchange perform better on standard inpainting (e.g., ResNet-50 achieves $0.745$ accuracy). 
This finding indicates that training on local artifacts (INP-X) generalizes better than training on global artifacts.
Fourth, \emph{models trained on INP-X achieve better localization performance} than those trained on standard inpainting. Illustrated in \cref{fig_app_more_localizations}.
Removing the global shortcut forces the models to learn the actual content discrepancies within the mask, leading to more precise localization of the manipulated region.
The advantage of INP-X training is larger for CNN backbones than for transformer backbones. 
We also observe that CNN-based models consistently achieve superior localization performance compared to ViTs (\cref{tab_model_robustness_localization}), aligning with prior findings on the limitations of transformer interpretability for precise spatial attribution~\cite{chefer2021transformer}.
We detail this analysis in \cref{sec:appendix_cnn_vit_localization}.

\subsection{Robustness to Other Corruptions.}
To further assess the robustness of the top-performing detectors (SPAI, Corvi2023, CLIP10, CLIP10+) and commercial APIs (Sightengine, Hive), we subjected them to additional common image corruptions. We applied (1) Gaussian Blur ($\sigma=3$, $\mu=0$) as a standard degradation~\cite{hendrycks2019benchmarking}, (2) a "Gaussian Light Spot" attack to simulate localized illumination anomalies, inspired by~\cite{li2023light}, and (3) JPEG compression at quality 80. The light spot attack adds a spatially localized intensity gain defined by $I'(x,y) = I(x,y) \cdot (1 + (A-1) e^{-\frac{d^2}{2r^2}})$, where $d$ is the Euclidean distance from a random center, $r=120$ is the spot radius, and $A=1.5$ is the peak intensity multiplier.

Results are presented in \cref{tab_robustness}. We observe that while Gaussian Blur and Light Spot attacks degrade performance for some detectors (e.g., SPAI drops on light spot), they are generally much less effective than our Inpainting Exchange. Notably, commercial APIs remain highly robust to these standard corruptions (Accuracy $>$ 72\% for Sightengine, $>$ 89\% for Hive Moderation) but fail catastrophically on our attack (Accuracy $\sim$ 55\%). This finding confirms that "Inpainting Exchange" targets a fundamental vulnerability: restoring high-frequency background statistics, rather than simply degrading image quality.

\begin{table}[t]
\centering
\caption{Robustness of top-performing detectors and commercial APIs to various corruptions compared to Inpainting Exchange. Standard corruptions (Blur, Light, JPEG) are generally handled well, whereas our attack evades detection.$^\dagger$}
\label{tab_robustness}
\vskip 0.1in
\resizebox{\columnwidth}{!}{
\begin{small}
\begin{sc}
\begin{tabular}{llccccc}
\toprule
Detector & Attack & Acc & AUC & Prec & Rec & F1 \\
\midrule
\multirow{4}{*}{\shortstack[l]{SPAI\\\tiny\cite{karageorgiou2024any}}}
& Blur & 0.712 & 0.813 & 0.668 & 0.846 & 0.746 \\
& Light & 0.553 & 0.576 & 0.551 & 0.516 & 0.533 \\
& JPEG & 0.699 & 0.797 & 0.661 & 0.820 & 0.732 \\
& \textbf{INP-X} & \textbf{0.542} & \textbf{0.567} & \textbf{0.546} & \textbf{0.506} & \textbf{0.525} \\
\midrule
\multirow{4}{*}{\shortstack[l]{Corvi2023\\\tiny\cite{corvi2023diffusion}}}
& Blur & 0.768 & 0.944 & 0.991 & 0.541 & 0.700 \\
& Light & 0.924 & 0.985 & 0.994 & 0.852 & 0.918 \\
& JPEG & 0.803 & 0.951 & 0.992 & 0.611 & 0.756 \\
& \textbf{INP-X} & \textbf{0.554} & \textbf{0.519} & \textbf{0.959} & \textbf{0.114} & \textbf{0.203} \\
\midrule
\multirow{4}{*}{\shortstack[l]{CLIP 10\\\tiny\cite{cozzolino2023clip}}}
& Blur & 0.539 & 0.762 & 0.924 & 0.085 & 0.155 \\
& Light & 0.590 & 0.897 & 0.964 & 0.186 & 0.312 \\
& JPEG & \textbf{0.503} & 0.619 & 0.662 & 0.014 & 0.027 \\
& \textbf{INP-X} & 0.509 & \textbf{0.606} & \textbf{0.778} & \textbf{0.025} & \textbf{0.048} \\
\midrule
\multirow{4}{*}{\shortstack[l]{CLIP 10+\\\tiny\cite{cozzolino2023clip}}}
& Blur & 0.607 & 0.728 & 0.858 & 0.255 & 0.394 \\
& Light & 0.676 & 0.879 & 0.903 & 0.393 & 0.548 \\
& JPEG & 0.607 & 0.770 & 0.859 & 0.256 & 0.395 \\
& \textbf{INP-X} & \textbf{0.563} & \textbf{0.673} & \textbf{0.800} & \textbf{0.169} & \textbf{0.279} \\
\midrule
\multirow{4}{*}{Sightengine}
& Blur & 0.724 & 0.831 & 0.975 & 0.406 & 0.574 \\
& Light & 0.780 & 0.918 & 0.983 & 0.570 & 0.722 \\
& JPEG & 0.831 & 0.864 & 0.984 & 0.680 & 0.804 \\
& \textbf{INP-X} & \textbf{0.548} & \textbf{0.588} & \textbf{0.944} & \textbf{0.102} & \textbf{0.184} \\
\midrule
\multirow{4}{*}{Hive Moderation}
& Blur & 0.905 & 0.913 & 0.988 & 0.820 & 0.896 \\
& Light & 0.895 & 0.901 & 0.988 & 0.800 & 0.884 \\
& JPEG & 0.885 & 0.890 & 0.987 & 0.780 & 0.871 \\
& \textbf{INP-X} & \textbf{0.555} & \textbf{0.588} & \textbf{0.923} & \textbf{0.120} & \textbf{0.212} \\
\bottomrule
\end{tabular}
\end{sc}
\end{small}}
{\footnotesize $^\dagger$JPEG compression at quality 80. Interestingly, SPAI performs better under JPEG (0.699 Acc) than on INP (0.661 Acc).}
\end{table}

To verify that the performance drop is not due to edge artifacts, we conducted additional ablation studies using Soft Alpha Blending (Edge Blurring). It resulted in similarly low detection scores, confirming that edge discontinuities are not the primary signal used by detectors. We provide detailed results and methodology provided in \cref{sec:appendix_edge_analysis}.

\subsection{Spectral Analysis of Inpainting Artifacts}
\label{subsec:spectral-inpainting}
To validate the theoretical insights of Theorem \ref{thm_attenuation} regarding spectral bias, we analyzed the frequency domain characteristics of the inpainted images using a specialized detection pipeline adapted from~\cite{durall2020watch}. The analysis proceeds in three steps: First, all images are resized to $512 \times 512$ resolution. We then apply a pixel-wise cross-difference high-pass filter ($CD_{i,j} = |I_{i,j} - I_{i+1,j} - I_{i,j+1} + I_{i+1,j+1}|$) to suppress semantic content and accentuate high-frequency generative artifacts. Finally, we compute the 2D FFT, normalize the spectra, and average them across the dataset to reveal global spectral fingerprints \cite{bammey2024synthbuster}.

\textbf{Vulnerability to Spectral Detection:} As observed in the spectral heatmaps (\cref{fig_fft_analysis}), standard latent diffusion inpainting introduces distinct periodic grid-like artifacts, manifesting as bright uniform peaks. These artifacts are not merely visual imperfections; they act as traceable fingerprints for forensic detectors. Works such as Synthbuster~\cite{bammey2024synthbuster} explicitly exploit similar FFT-based spectral disparities to train robust classifiers that distinguish diffusion-generated content from real images. As shown in \cref{tab_spectral_scores}, our method suppresses these artifacts (improving spectral MSE by 11$\times$ on SUN-RGBD), effectively eliminating the primary signal leveraged by such detectors.

\subsection{Impact of Mask Size and Dataset Bias}

We further investigated the relationship between mask size and detector performance. As shown in \cref{fig_mask_size_analysis}, increasing the mask size generally correlates with higher detection accuracy, particularly for the "Inpainting Exchange" method. This finding is intuitive: as the mask size increases, the proportion of "exchanged" (real) pixels decreases, leaving more generated content for the detector to latch onto. Conversely, smaller masks in our exchange method are harder to detect because they retain more of the original, high-frequency "clean" pixels. We also inspected the effect of different datasets on performance. 
We provide them, along with additional mask-size analysis metrics, in the Appendix (\cref{sec:appendix_additional_stats}).

We further investigated whether these spectral discrepancies correlated with the extent of the inpainted region. Surprisingly, our analysis reveals an inverse relationship in some cases. While CelebA-HQ exhibits a narrower spectral gap (\cref{tab_spectral_scores}), its mean mask ratio ($\mu_{m} \approx 0.10$) is larger than that of SUN-RGBD ($\mu_{m} \approx 0.06$). Despite having smaller inpainted regions on average, SUN-RGBD displays far more severe spectral artifacts in standard inpainting. This result confirms that the observed spectral bias is not a trivial function of mask size, but rather stems from the model's struggle with complex frequency distributions. The muted gap in CelebA-HQ likely arises from the intrinsic spectral noise introduced by its alignment and super-resolution pre-processing, which creates a noisy baseline that partially obscures generative artifacts.

\begin{figure}[ht!]
    \centering
    \begin{subfigure}[t]{0.85\linewidth}
        \centering
        \includegraphics[width=\linewidth]{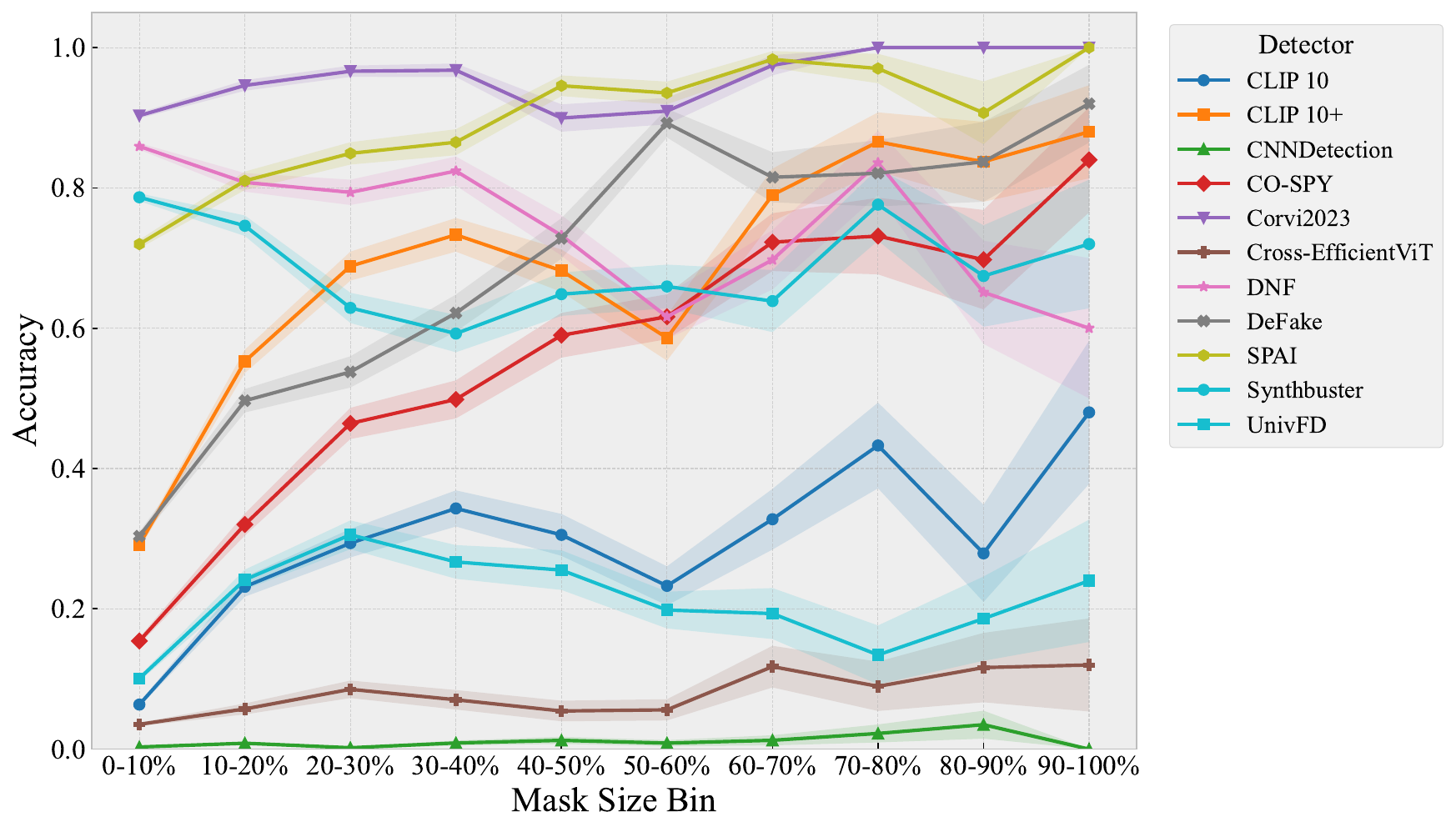}
        \caption{Accuracy vs Mask Size (INP)}
        \label{fig_mask_size_analysis_a}
    \end{subfigure}
    
    \vspace{0.5em}
    
    \begin{subfigure}[t]{0.85\linewidth}
        \centering
        \includegraphics[width=\linewidth]{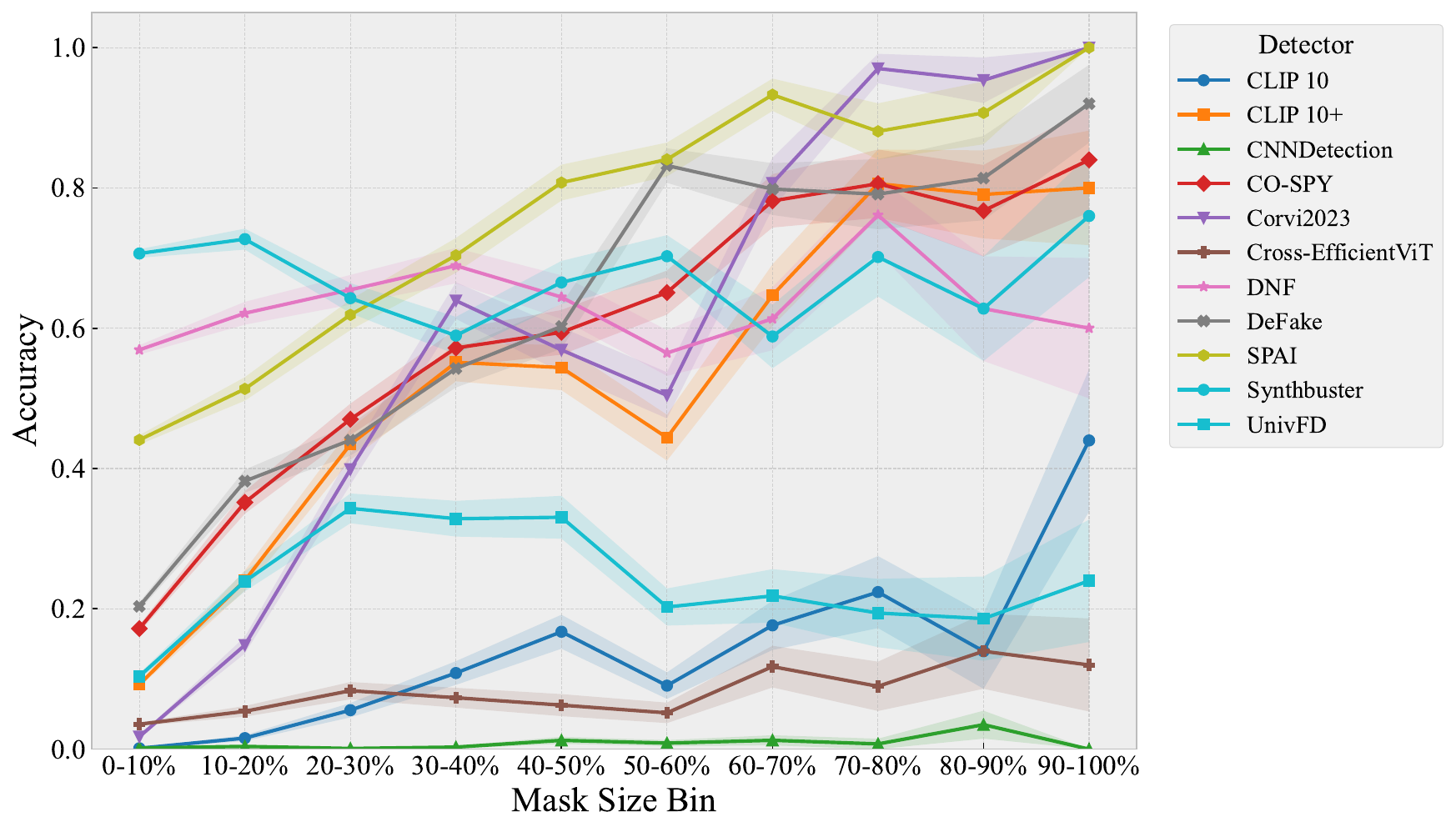}
        \caption{Accuracy vs Mask Size (INP-X)}
        \label{fig_mask_size_analysis_b}
    \end{subfigure}
    \caption{Impact of mask coverage on detection accuracy. Larger masks generally enable better detection, but our exchange method consistently degrades performance compared to standard inpainting across all sizes.}
    \label{fig_mask_size_analysis}
\end{figure}

\begin{table}[h]
\centering
\caption{Spectral Difference Scores (MSE $\times$ 1000). We measure the divergence between the frequency spectra of real vs. manipulated images.}
\label{tab_spectral_scores}
\begin{small}
\begin{sc}
\begin{tabular}{lcc}
\toprule
Dataset & Semi-Truths & \textbf{INP-X} \\
\midrule
CelebA-HQ  & 15.3093 & \textbf{12.8710} \\
CityScapes & 11.4707 & \textbf{5.8937} \\
OpenImages & 17.1325 & \textbf{1.9152} \\
SUN-RGBD   & 23.4154 & \textbf{1.9436} \\
\bottomrule
\end{tabular}
\end{sc}
\end{small}
\end{table}

MSE reduction is less pronounced in CelebA-HQ, suggesting that the VAE can be better optimized for faces. However, in diverse scenes (OpenImages, SUN-RGBD), the VAE struggles to preserve high-frequency texture, which our method explicitly corrects.

\section{Limitations}
A limitation of our current study is that the dataset focuses on VAE-based architectures. While currently less efficient and significantly less used, a deeper analysis of non-VAE architectures, such as component-based or pixel-space models, is warranted.
While addressable by alpha blending (see \cref{sec:appendix_edge_analysis}), subtle boundary effects may still occur depending on mask and blending, and we do not claim perceptual quality. 
Finally, while our intent is diagnostic, the exchange operation could be used to evade existing detectors. We view this risk as intrinsic to vulnerability analysis and as further motivation for content-aware and localization-based detection approaches.

\section{Conclusion}
We expose a fundamental vulnerability in AI detectors: a reliance on global VAE encoding-decoding artifacts rather than local content. By removing these artifacts via INP-X, we show that current detectors are significantly less robust. And training models with INP-X improves robustness and localization performance. 
Future work can focus on model improvement, such as "frequency-preserving" VAEs or decoding strategies that explicitly enforce spectral consistency.

\section*{Impact Statement}
This paper studies AI-generated image detection in the context of inpainting with implications for misinformation. Our findings reveal existing detectors' vulnerability. We expect these insights to promote the development of more content-aware detection methods. A potential negative impact lies in the facilitation of evasion attacks against existing detectors. However, we believe that this potential risk also motivates future research on robust localization-based detectors. Our released benchmark provides concrete tools for this effort.

\bibliography{paper}
\bibliographystyle{icml2026}

\newpage
\appendix
\onecolumn
\section{Appendix}
\label{sec:appendix_additional_stats}

\subsection{Why Not Restore After Generation?}
\label{sec:appendix_restore}
Our findings raise a fundamental question about diffusion-based inpainting pipelines: if the denoising process necessarily operates over the entire latent representation during generation, why is the original image not restored outside the mask as a post-processing step?

During the generation phase, full-image denoising is required. The diffusion model operates in latent space where spatial locality is entangled, and the VAE decoder must reconstruct the complete image to ensure smooth, coherent boundaries between edited and unedited regions. However, once the generation is complete, there is no fundamental reason why the original pixels outside the mask cannot be restored. Our Inpainting Exchange method demonstrates precisely this: by simply replacing pixels where $M=0$ with the original image, we eliminate the global artifact entirely.

One might argue that naive pixel replacement could introduce visible seams at mask boundaries. However, such edge artifacts can be effectively mitigated using classical image blending techniques such as Poisson editing~\cite{perez2003poisson}, which solves for a gradient-domain blend that preserves the generated content while matching the boundary conditions of the original image. We also show in \cref{tab_edge_artifacts_appendix} that the accuracy drop is not due to edge artifacts.

\subsection{Implementation Details}
\label{sec:appendix_impl}

\paragraph{Model Architectures.}
We evaluate four detector architectures, all initialized with ImageNet-pretrained weights:
\begin{itemize}
    \item \textbf{ResNet-50}~\cite{he2016resnet}: We replace the final fully-connected layer (\texttt{fc}) with a linear layer mapping to 2 classes.
    \item \textbf{EfficientNet-B0}~\cite{tan2019efficientnet}: We replace \texttt{classifier[1]} with a 2-class linear head.
    \item \textbf{ViT-B/16}~\cite{dosovitskiy2020vit}: We replace \texttt{heads.head} with a 2-class linear layer.
    \item \textbf{CLIP ViT-B/32}~\cite{radford2021clip}: We freeze the vision encoder for 3 epochs of linear probing (lr=$10^{-3}$), then fine-tune all parameters for 1 epoch with differential learning rates (encoder: $10^{-6}$, classifier: $10^{-4}$).
\end{itemize}

\paragraph{Training Configuration.}
All models (except CLIP) are trained for 3 epochs using the Adam optimizer with a learning rate of $10^{-4}$ and a batch size of 32. We use cross-entropy loss for binary classification. The training set is split 90\%/10\% for train/validation using stratified sampling. For CLIP, we use AdamW with weight decay 0.01 during fine-tuning.

\paragraph{Saliency Methods.}
For CNN architectures (ResNet-50, EfficientNet-B0), we use Grad-CAM~\cite{selvaraju2017gradcam} with the final convolutional layer as the target (\texttt{layer4[-1]} for ResNet, \texttt{features[-1]} for EfficientNet). For Transformer architectures (ViT, CLIP), we use Attention Rollout~\cite{abnar2020attentionrollout}, which recursively multiplies attention matrices across all encoder layers and extracts the CLS token's attention to patch tokens. This provides more interpretable localization for self-attention mechanisms. We discuss the choice further at \cref{sec:appendix_cnn_vit_localization}.

\paragraph{Localization Metrics.}
We compute mIoU and mAP by binarizing saliency maps at a threshold of 0.5 and compare them against ground-truth masks resized to $224 \times 224$ using nearest-neighbor interpolation.

\subsection{Correlation Analysis}
We provide additional visualizations of the correlations between VAE loss, inpainting loss, and high-frequency content.\cref{fig_app_corr_matrices} shows correlation matrices across datasets, while \cref{fig_app_corr_hist,fig_app_corr_box} present the distribution of pixel-level correlations.

\begin{figure*}[h]
    \centering
    \includegraphics[width=0.9\textwidth]{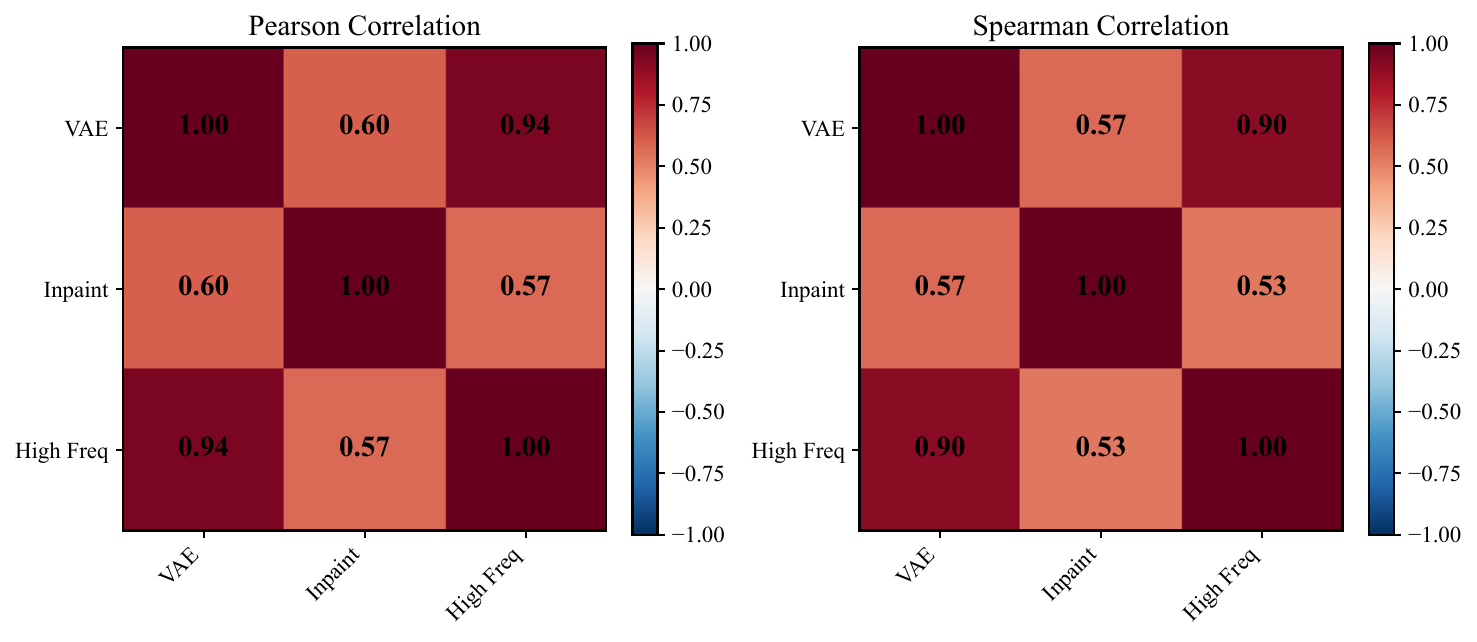}
    \caption{Sequence of correlation matrices showing the relationships between different error metrics across datasets.}
    \label{fig_app_corr_matrices}
\end{figure*}

\begin{figure*}[h]
    \centering
    \begin{minipage}{0.48\textwidth}
        \centering
        \includegraphics[width=\linewidth]{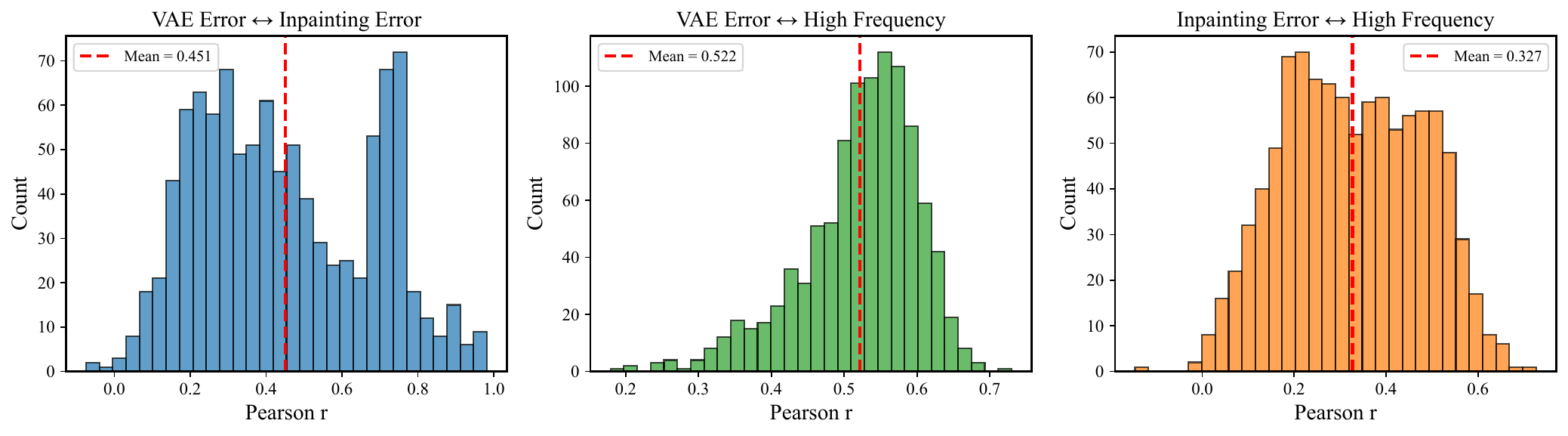}
        \caption{Histograms of pixel-level correlations.}
        \label{fig_app_corr_hist}
    \end{minipage}
    \hfill
    \begin{minipage}{0.48\textwidth}
        \centering
        \includegraphics[width=\linewidth]{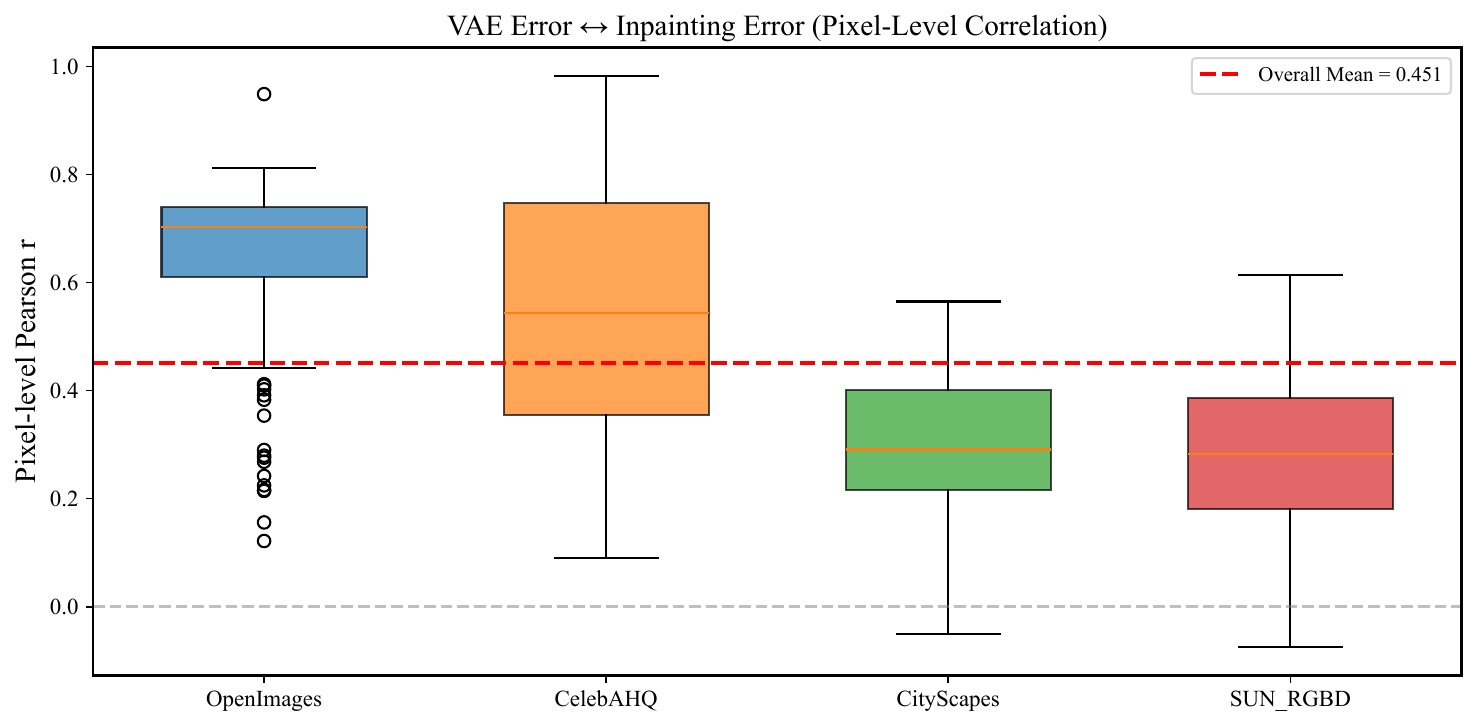}
        \caption{Boxplots of pixel-level correlations.}
        \label{fig_app_corr_box}
    \end{minipage}
\end{figure*}

\paragraph{Note on Reported Detector Accuracies.}
The open-source detectors evaluated in this work often report substantially 
higher accuracies in their original publications. 
Their reported results are typically obtained on datasets of "fully" synthetic images (e.g., entire AI-generated faces or scenes) 
rather than partially edited images, such as inpainting outputs. 
Detecting inpainting is inherently more challenging, 
as only a fraction of the image contains generated content, and even standard inpainting detection yields 
near-chance performance for several detectors in our evaluation (\cref{tab_all_detectors}).
Our INP-X method further exacerbates this difficulty by eliminating the global VAE artifacts that many detectors implicitly rely on.

\subsection{Sub-Dataset Analysis}
We analyzed the performance of detectors on specific sub-folders within each major dataset to ensure our findings are consistent across diverse data distributions. \cref{fig_app_subdata_bars} shows detailed accuracy breakdowns by sub-folder, and \cref{fig_app_real_acc} reports accuracy on real images across datasets.

\begin{figure*}[h]
    \centering
    \includegraphics[width=0.48\textwidth]{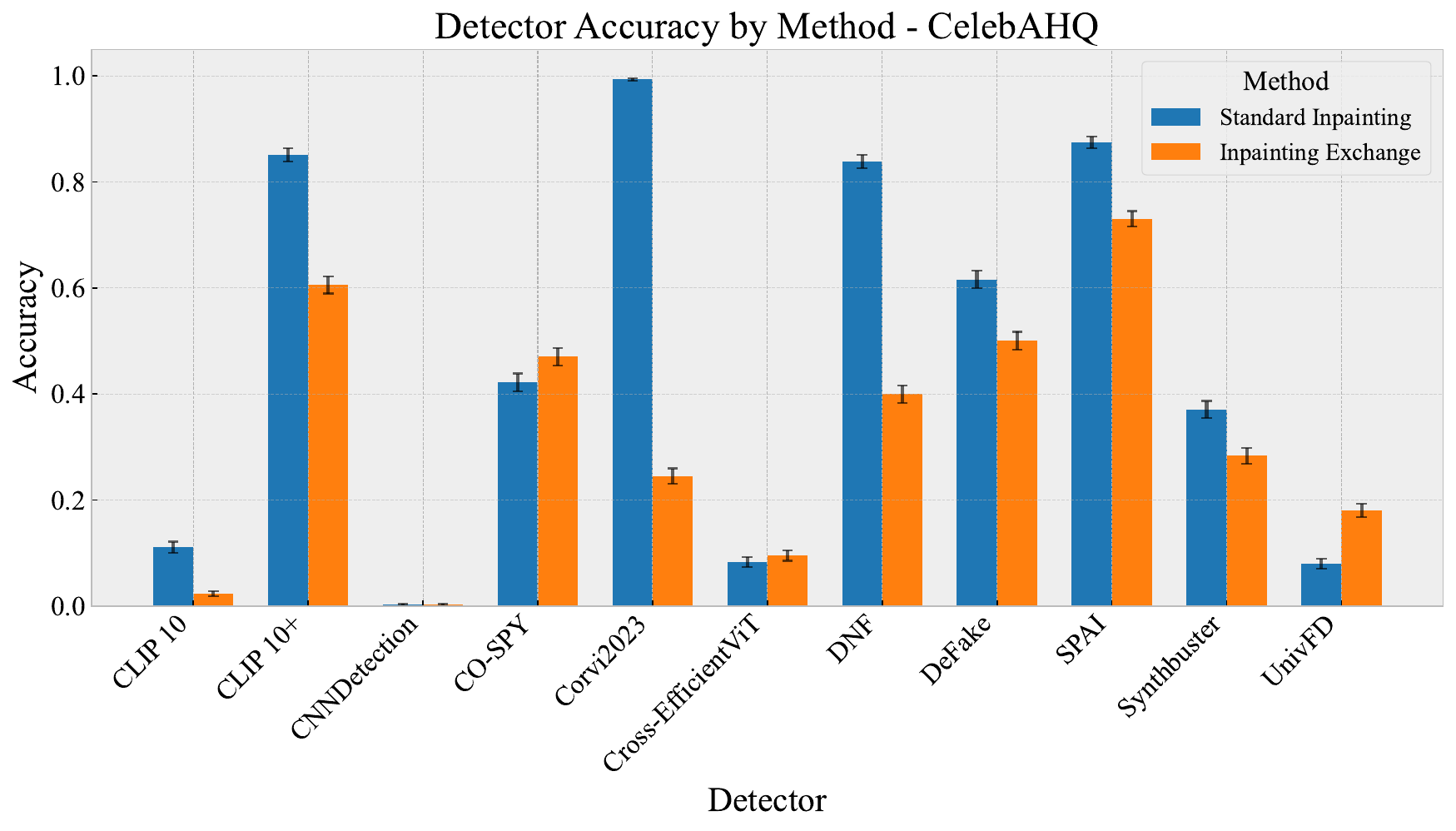}
    \includegraphics[width=0.48\textwidth]{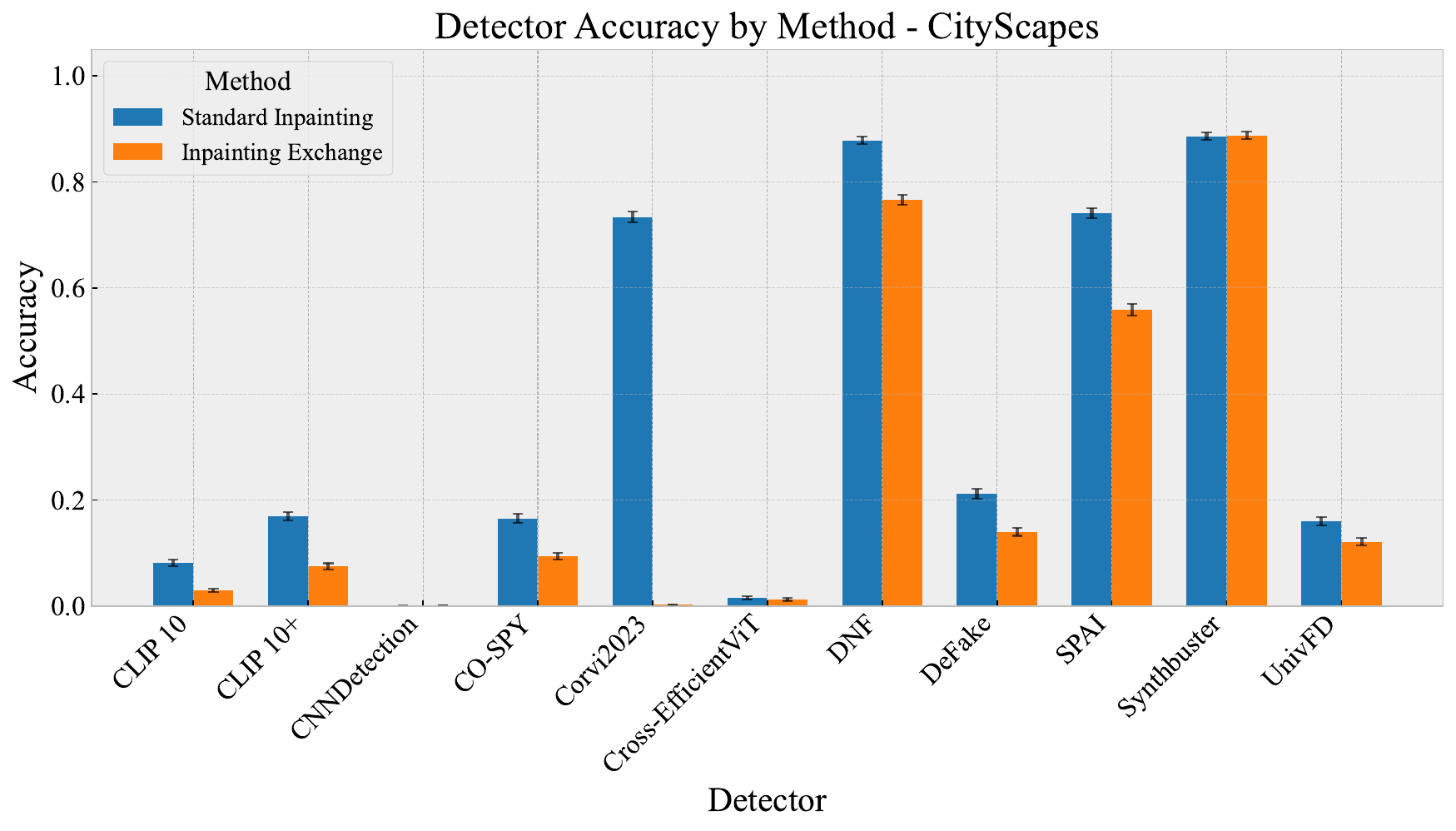}
    \\
    \includegraphics[width=0.48\textwidth]{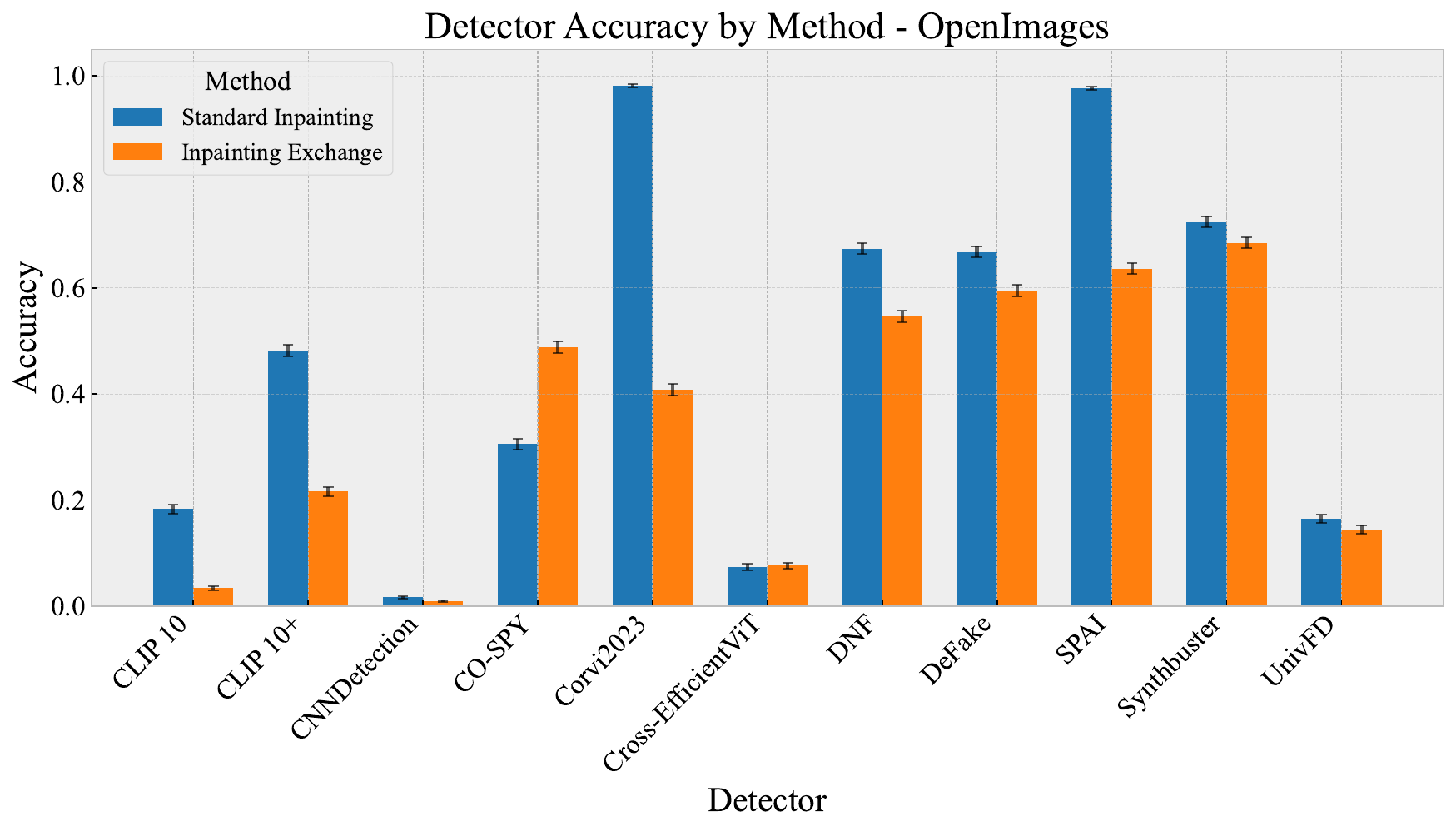}
    \includegraphics[width=0.48\textwidth]{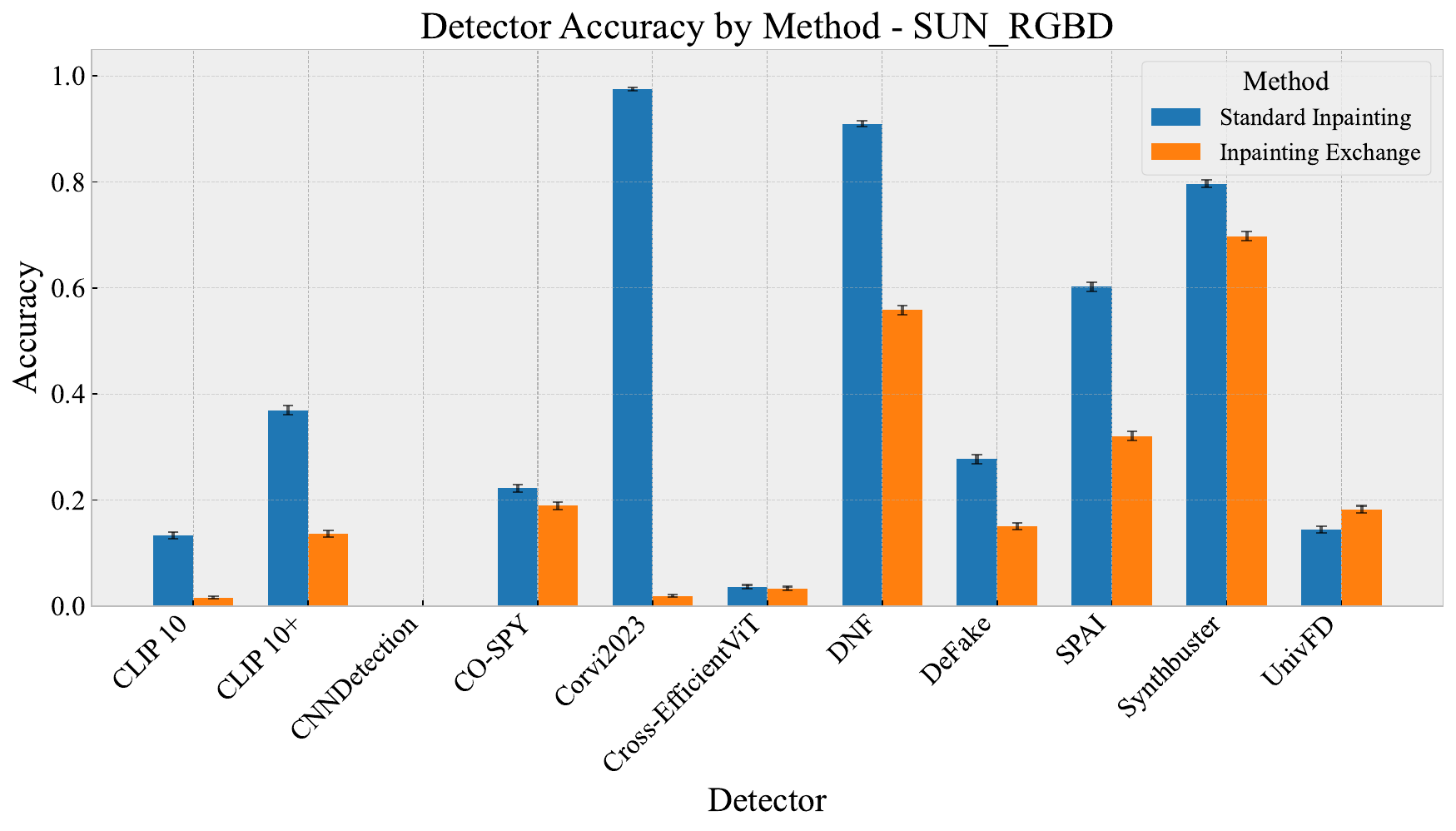}
    \caption{Detailed accuracy breakdown by sub-folder for each dataset.}
    \label{fig_app_subdata_bars}
\end{figure*}

\begin{figure*}[h]
    \centering
    \includegraphics[width=0.6\textwidth]{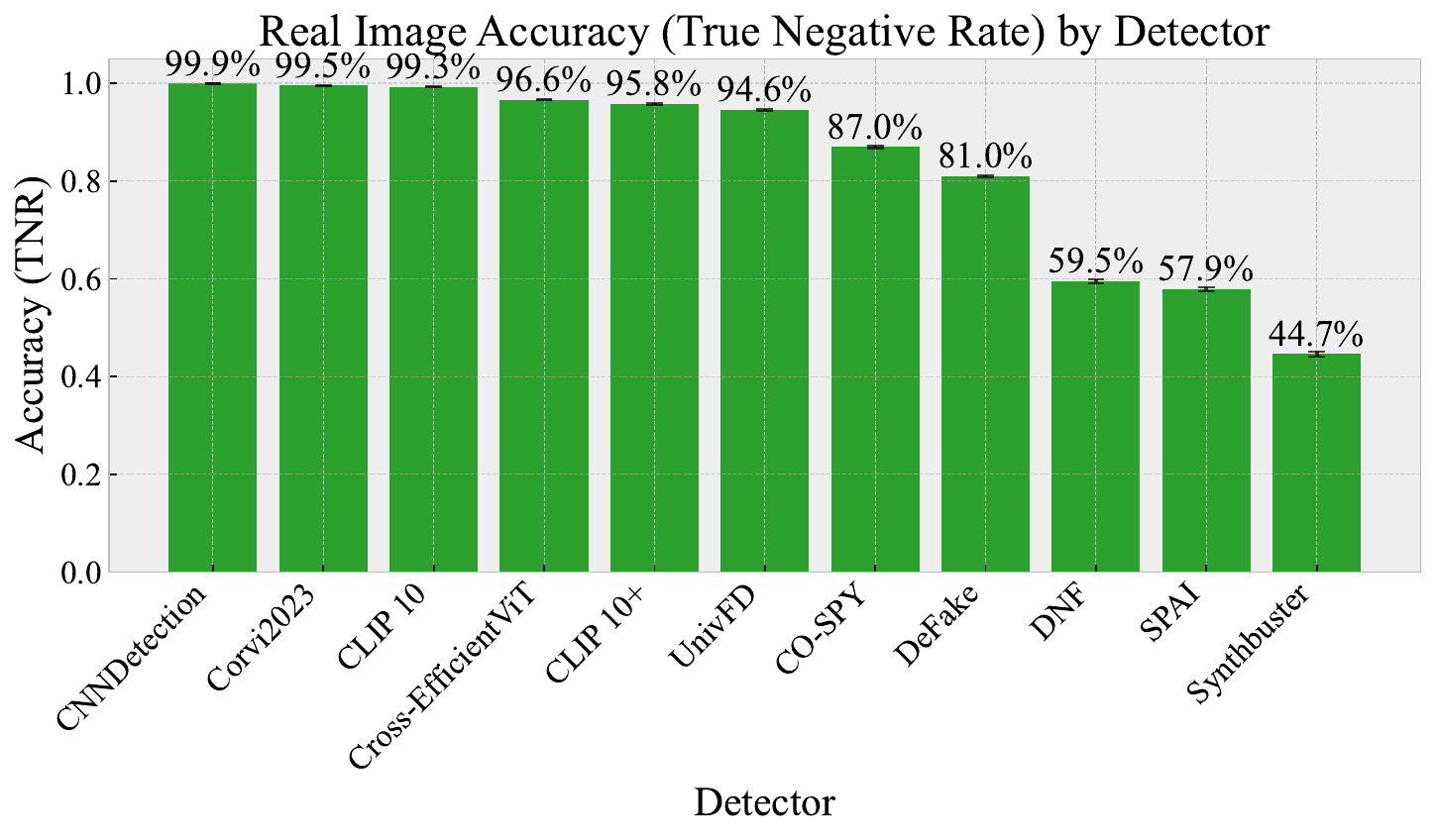}
    \caption{Accuracy of detectors on Real Images across different datasets.}
    \label{fig_app_real_acc}
\end{figure*}

\subsection{Localization Examples}
\cref{fig_app_more_localizations} presents localization examples showing detector attention via GradCAM.

\begin{figure*}[h]
    \centering
    \includegraphics[width=0.45\textwidth]{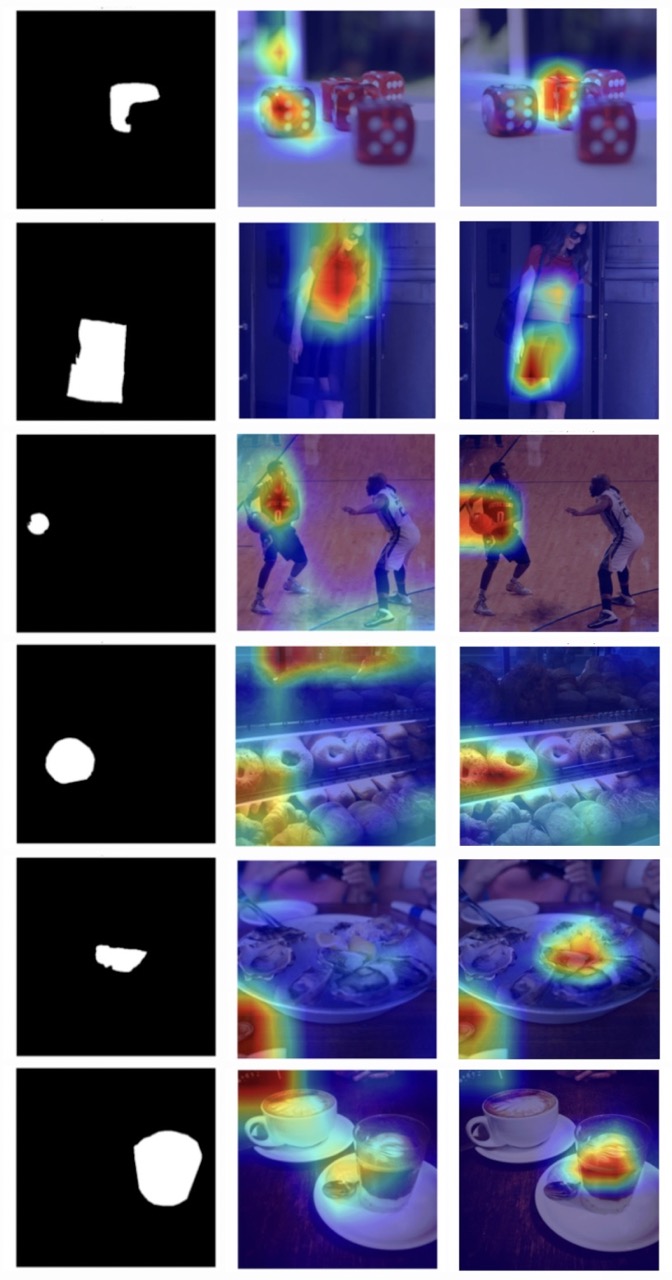}
    \caption{Localization examples showing detector attention via GradCAM. (a) Mask indicates the inpainted region. (b) Standard inpainting triggers global attention across the image. (c) Our Inpainting Exchange method better localizes attention to the actual edited region.}
    \label{fig_app_more_localizations}
\end{figure*}

\begin{figure*}[h]
    \centering
    \includegraphics[width=0.7\textwidth]{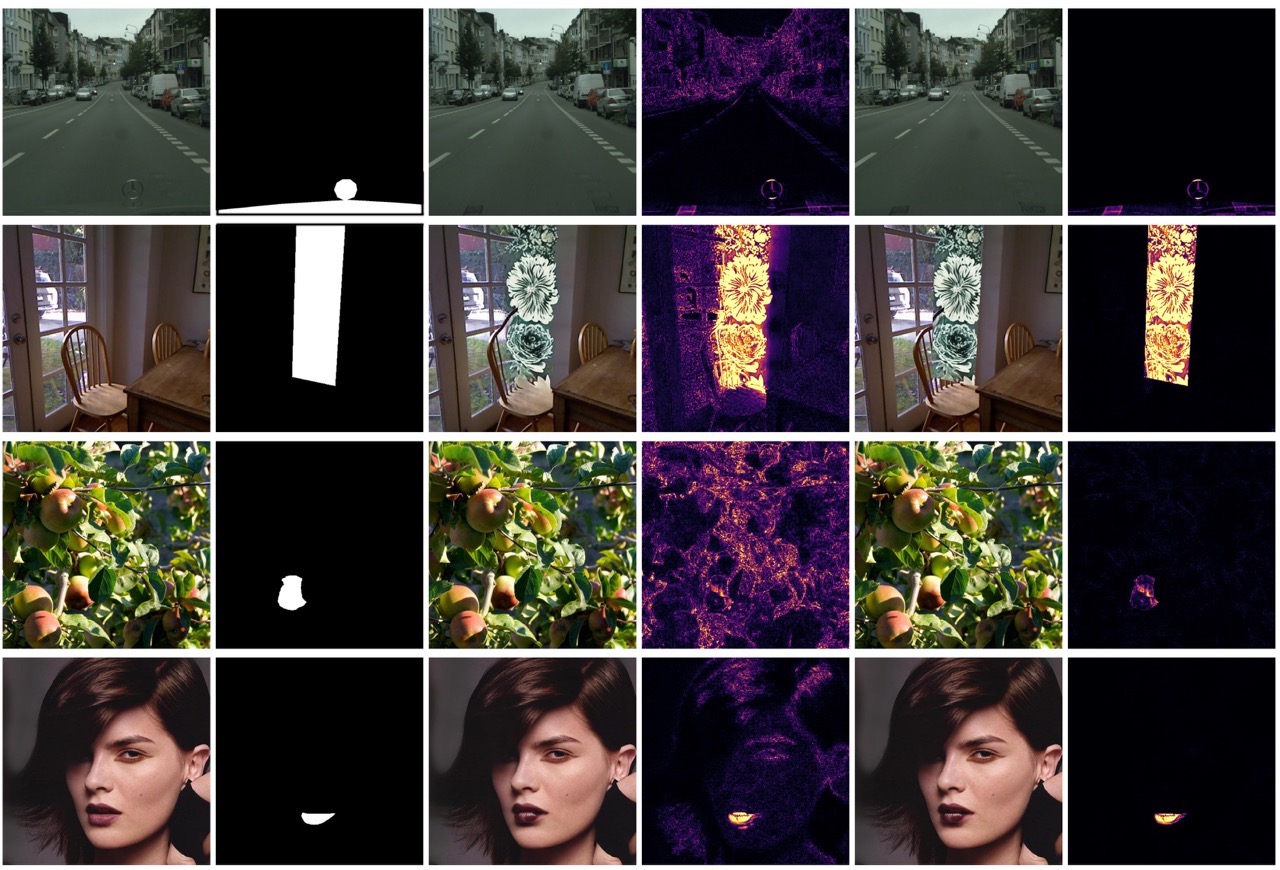}
    \caption{Additional Comparison of standard inpainting versus Inpainting Exchange. From left to right: Original image, Mask, Standard Inpainting, Difference (Original - Inpainting), Inpainting Exchange (INP-X), and Difference (Original - INP-X). The difference maps reveal that standard inpainting introduces global artifacts across the entire image, while our method produces differences only within the masked region.}
    \label{fig_more_differences}
\end{figure*}

\subsection{Spectral Analysis}
As explained in \cref{subsec:spectral-inpainting}, we present a visual comparison of the frequency spectra of real, standard-inpainted, and Inpainting Exchange images in \cref{fig_fft_analysis}. Standard inpainting exhibits distinct grid-like high-frequency artifacts (bright spots/lines in the spectrum) due to the VAE decoding process. Our method eliminates these artifacts, resulting in a spectrum that closely resembles that of the original real image.

\begin{figure*}[h]
    \centering
    \includegraphics[width=0.4\textwidth]{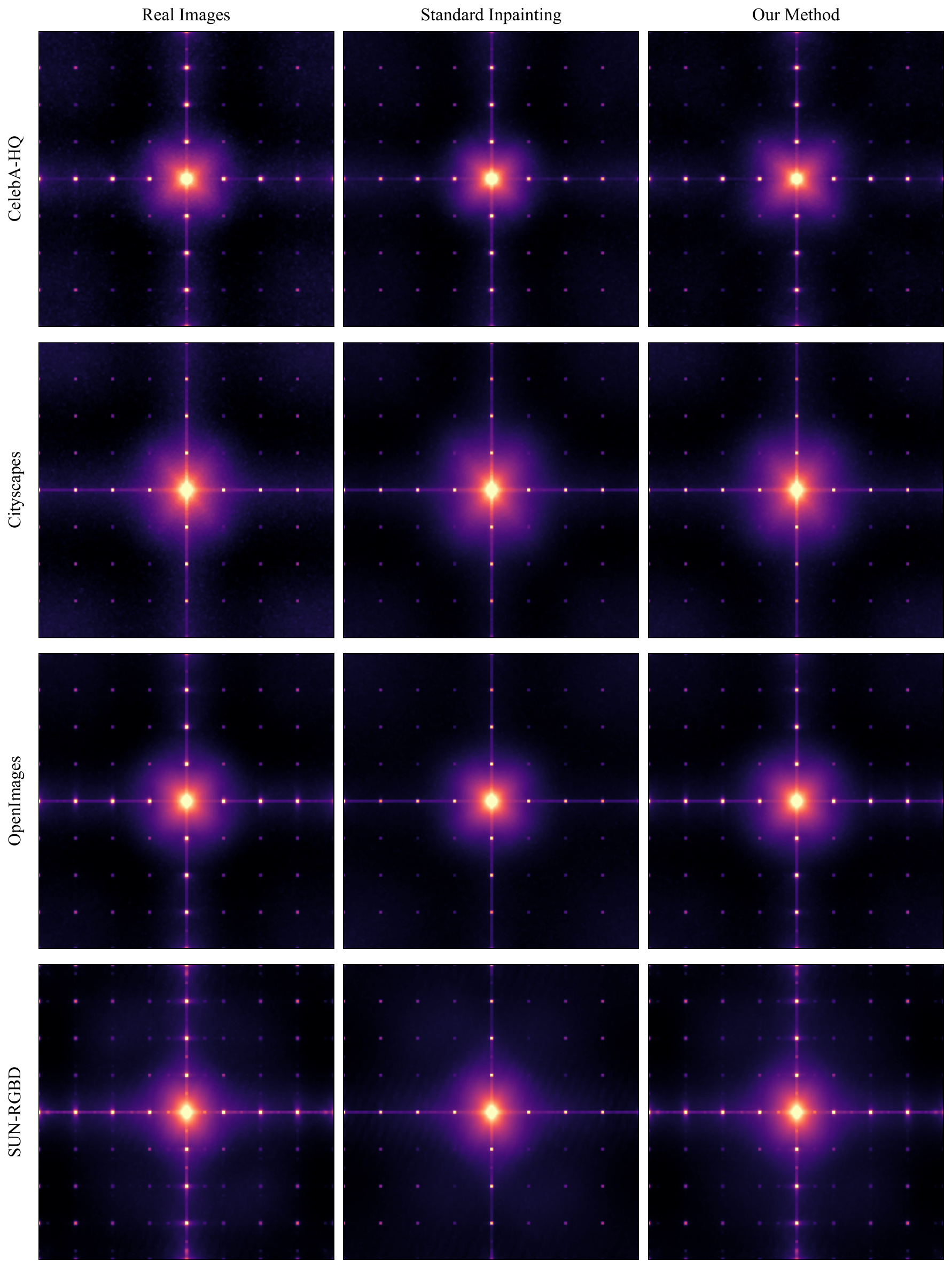}
    \caption{Frequency domain analysis comparing real images, standard inpainting, and our Inpainting Exchange method. The FFT magnitude spectra reveal the global spectral artifacts introduced by standard inpainting, which are eliminated in our approach.}
    \label{fig_fft_analysis}
\end{figure*}

\subsection{JPEG Normalization}
\label{sec:jpeg_norm}
Motivated by recent findings on format-level biases~\cite{grommelt2024jpeg}, all images in our dataset are converted to JPEG format. This normalization is critical for two reasons: (1) real photographs are typically stored as JPEG, while AI-generated images are often saved in lossless PNG format, creating a trivial shortcut for detectors~\cite{zhu2023genimage}; and (2) JPEG compression introduces uniform quantization artifacts across both real and generated images, ensuring that detectors must rely on semantic or structural cues rather than format-level statistical differences. By standardizing the compression pipeline, we eliminate confounding variables and provide a fair evaluation setting that reflects real-world deployment scenarios where images undergo lossy compression during storage and transmission.

\subsection{VAE-less Inpainting Methods}
Modern inpainting methods predominantly rely on VAE-based architectures (e.g., Latent Diffusion) for efficiency. However, there exist pixel-space diffusion methods that do not use a VAE, such as RePaint~\cite{lugmayr2022repaint}. While these methods theoretically avoid VAE-induced reconstruction artifacts in the unmasked regions (since they operate directly in pixel space), they suffer from significant practical limitations that hinder their widespread adoption:

\begin{itemize}
    \item \textbf{Inference Latency:} Pixel-space diffusion is extremely computationally expensive. For a single $256 \times 256$ image, RePaint takes approximately 9 minutes on a P100 GPU. In contrast, Latent Diffusion models (like Stable Diffusion) can generate higher-resolution outputs in less than 30 seconds with the same number of sampling steps.
    \item \textbf{Generalization and Training:} RePaint typically requires training separate models for specific datasets (e.g., a dedicated model for CelebA-HQ). This lacks the zero-shot generality of large-scale latent models which can handle arbitrary domains.
    \item \textbf{Generation Quality:} As shown in \cref{fig_repaint_artifacts}, while RePaint preserves the background, the inpainting quality itself can be inconsistent compared to state-of-the-art latent models.
\end{itemize}

\begin{figure*}[h]
    \centering
    \includegraphics[width=0.5\textwidth]{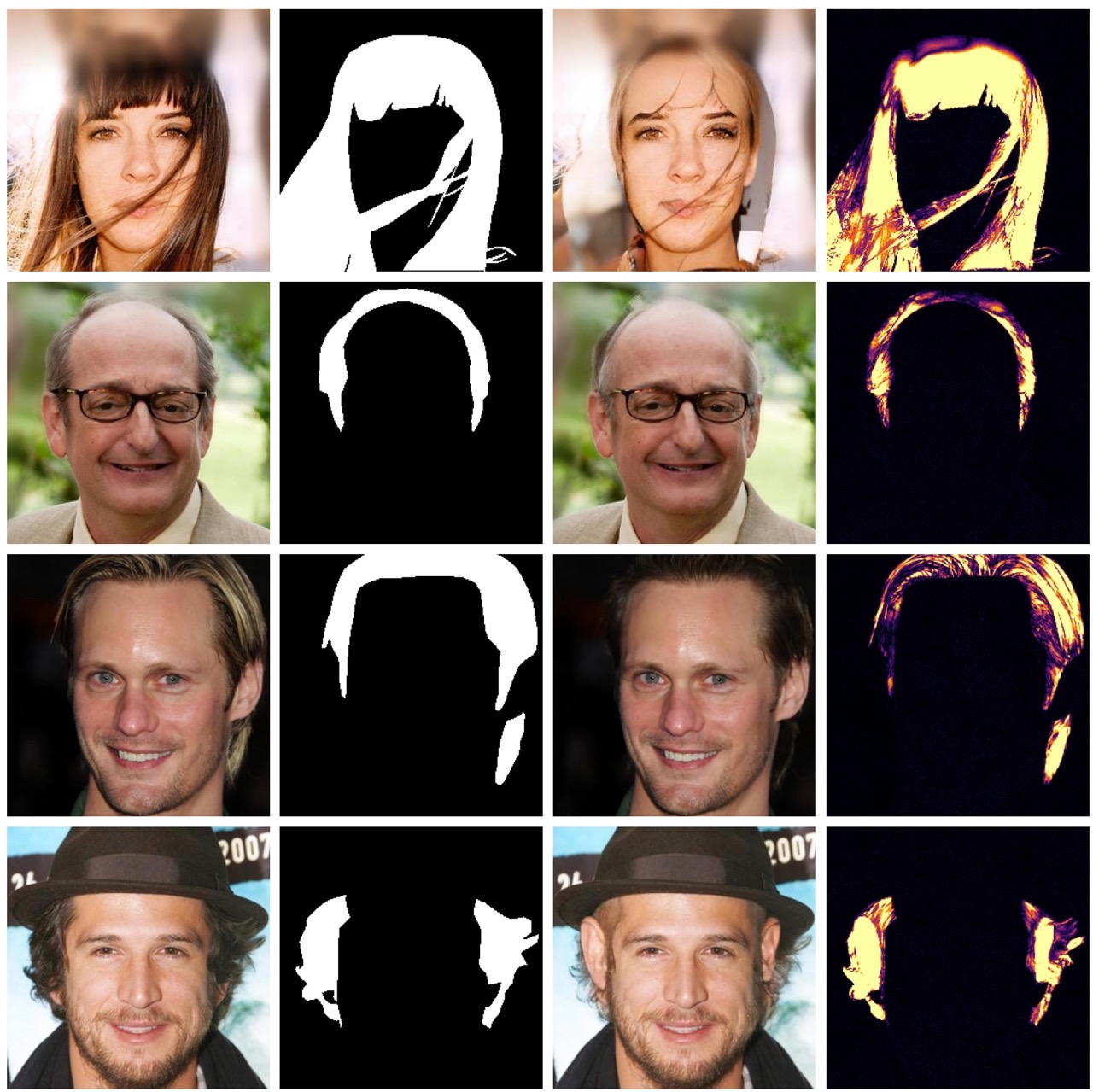}
    \caption{Analysis of RePaint (VAE-less inpainting). Comparison of (1) Original Image, (2) Mask, (3) RePaint Output, and (4) Difference Map. While the unmasked region is perfectly preserved (zero difference), the method is prohibitively slow and requires dataset-specific training.}
    \label{fig_repaint_artifacts}
\end{figure*}

\subsection{Multiresolution Analysis of VAE Artifacts}
\label{sec:appendix_wavelet}

We present a wavelet-theoretic analysis of VAE-induced artifacts using the multiresolution framework of~\cite{mallat1989multiresolution,mallat1999wavelet}. It provides a spatially localized counterpart to the Fourier-domain attenuation established in \cref{thm_attenuation} and connects the observed artifacts to classical rate distortion limits.

\begin{definition}[Multiresolution Approximation~\cite{mallat1989multiresolution}]
A multiresolution approximation (MRA) of $L^2(\mathbb{R}^2)$ is a sequence of closed subspaces $\{V_j\}_{j\in\mathbb{Z}}$ such that:
\begin{enumerate}
    \item $V_j \subset V_{j+1}$ for all $j$ (nested structure; increasing $j$ corresponds to finer scales),
    \item $\bigcap_{j\in\mathbb{Z}} V_j = \{0\}$ and $\overline{\bigcup_{j\in\mathbb{Z}} V_j} = L^2(\mathbb{R}^2)$,
    \item $f(x)\in V_j \Leftrightarrow f(2x)\in V_{j+1}$ (scaling property),
    \item There exists a scaling function $\phi\in V_0$ such that $\{\phi(x-k)\}_{k\in\mathbb{Z}^2}$ is an orthonormal basis of $V_0$.
\end{enumerate}
For each $j$, define the wavelet subspace $W_j$ by $V_{j+1}=V_j\oplus W_j$. The subspace $W_j$ captures detail information at spatial scale $2^{-j}$. Let $\mathcal{P}_{W_j}$ denote the orthogonal projector onto $W_j$.
\end{definition}

For any $x\in L^2(\mathbb{R}^2)$, the wavelet expansion takes the form
\begin{equation}
    x = \sum_{j=-\infty}^{+\infty} \mathcal{P}_{W_j}x,
\end{equation}
with convergence in $L^2$. For finite $H\times W$ images, the sum is truncated to a finite range of scales; we assume standard periodic or symmetric boundary handling.

\begin{theorem}[Wavelet attenuation under spatial compression]
\label{thm_wavelet_attenuation}
Let $\mathcal{T}=\mathcal{D}\circ\mathcal{E}$ be an autoencoder whose latent spatial resolution is reduced by a factor $r=2^{j_c}$ relative to the input. Let $N_j$ denote the number of scalar wavelet coefficients at scale $j$; for a 2D image,
\[
N_j \approx \frac{HW}{2^{2j}}, \qquad
N_{j_c} \approx \frac{HW}{r^2}.
\]

Assume:
\begin{enumerate}
    \item Mean-squared error is the distortion metric;
    \item the latent code $Z$ satisfies an entropy constraint $H(Z)\le C$ (bits);
    \item for coefficients at each scale, a Gaussian high-rate rate distortion approximation is valid.
\end{enumerate}

Then for any \emph{fine} scale $j>j_c$, there exists a constant
\[
K \;=\; \frac{2\ln 2 \cdot C}{N_{j_c}},
\]
and a decoder noise term $\sigma_j^2\ge 0$, such that
\begin{equation}
\label{eq:wavelet_decay}
\mathbb{E}\!\left[\|\mathcal{P}_{W_j}\tilde{x}\|_2^2\right]
\;\le\;
K\,4^{-(j-j_c)}\,
\mathbb{E}\!\left[\|\mathcal{P}_{W_j}x\|_2^2\right]
\;+\;
\sigma_j^2.
\end{equation}
\end{theorem}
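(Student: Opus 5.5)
The plan is to split the reconstruction at a fine scale $j>j_c$ into a part that is a deterministic function of the latent code $Z$, and a residual produced by the decoder. For the first part we bound the energy using the rate budget. Concretely, we write
\[
\mathcal{P}_{W_j}\tilde{x} \;=\; \mathbb{E}[\mathcal{P}_{W_j}x \mid Z] \;+\; \eta_j,
\]
where $\eta_j$ collects whatever the decoder adds beyond the MMSE estimate. Examples are hallucinated texture from adversarial or perceptual losses, and the upsampling pattern. We then \emph{define} $\sigma_j^2 := \mathbb{E}\|\eta_j\|^2 + 2\,|\mathbb{E}\langle \mathbb{E}[\mathcal{P}_{W_j}x\mid Z],\eta_j\rangle|$. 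This absorbs the cross term and makes the additive term in \eqref{eq:wavelet_decay} nonnegative by construction. What remains is to bound $\mathbb{E}\|\mathbb{E}[\mathcal{P}_{W_j}x\mid Z]\|^2$. As in the proof of \cref{thm_attenuation}, the law of total variance gives that this captured energy equals $\mathbb{E}\|\mathcal{P}_{W_j}x\|^2 - D_j$, where $D_j$ is the MMSE distortion on the band. So the task reduces to lower-bounding $D_j$ through the rate allocated to scale $j$.

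The next step is to bound the rate available per fine-scale coefficient. The information the latent carries about band $j$ satisfies $I(\mathcal{P}_{W_j}x; Z)\le H(Z)\le C$. By the Gaussian high-rate approximation (assumption 3), a rate $R_j$ in nats spread over $N_j$ coefficients with per-coefficient variance $v_j=\mathbb{E}\|\mathcal{P}_{W_j}x\|^2/N_j$ gives distortion $D_j \ge N_j v_j\, e^{-2R_j/N_j}$. The captured energy is therefore at most $N_j v_j(1-e^{-2R_j/N_j}) \le 2R_j v_j \le 2\ln 2\, C\, v_j$, using $1-e^{-u}\le u$ and $R_j\le C\ln 2$. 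Finally, $v_j = \mathbb{E}\|\mathcal{P}_{W_j}x\|^2/N_j$ with $N_j = N_{j_c}4^{\,j-j_c}$. Substituting yields exactly $K\,4^{-(j-j_c)}\mathbb{E}\|\mathcal{P}_{W_j}x\|^2$ with $K=2\ln 2\cdot C/N_{j_c}$, which is the claimed inequality.

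The main obstacle is making this bookkeeping legitimate. The bound $1-e^{-u}\le u$ is used without any regime restriction, so the inequality holds even when $K\,4^{-(j-j_c)}>1$; in that case it is merely vacuous. This keeps us from having to argue that the fine-scale rate is small. The delicate points lie elsewhere. First, the MMSE-versus-decoder split must be stated so that the cross term is charged to $\sigma_j^2$ rather than ignored. Second, applying the Gaussian rate-distortion formula to a single band needs care, because the constraint $C$ is shared across all scales; bounding each band by the full $C$ is loose but valid. I would state these two modelling steps explicitly as the content of assumptions 1–3 and keep every inequality one-directional, so that the looseness only weakens the constant.
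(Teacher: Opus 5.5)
Your proposal is correct and follows the paper's route: you bound the latent's information about band $j$ by $C$, apply the Gaussian distortion--rate formula at $C/N_j$ bits per coefficient, linearize, and substitute $N_j=N_{j_c}4^{\,j-j_c}$. Your version is more careful than the paper's sketch (the conditional-mean split gives $\sigma_j^2$ an explicit meaning, and $1-e^{-u}\le u$ turns the paper's small-$R_j$ approximation into a true inequality), but note that the lower bound on $D_j$ is not a harmless loosening: it holds only because assumption 3 lets you treat the band as i.i.d.\ Gaussian with common variance, and for other band statistics the true distortion can be smaller, exactly as in the paper.
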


\begin{proof}[Sketch]
By the data-processing inequality,
\[
I(\mathcal{P}_{W_j}X;\mathcal{P}_{W_j}\tilde X)\le I(X;Z)\le C.
\]
The number of coefficients at scale $j$ satisfies $N_j\approx N_{j_c}\,4^{\,j-j_c}$. Under the Gaussian rate distortion approximation, the per-coefficient distortion at rate $R_j=C/N_j$ is
\[
D_j = \sigma_{X,j}^2\,2^{-2R_j},
\]
where $\sigma_{X,j}^2=\mathbb{E}[\|\mathcal{P}_{W_j}X\|_2^2]/N_j$ is the per-coefficient variance. For small $R_j$,
\[
1-2^{-2R_j} \approx 2\ln 2\,R_j.
\]
Thus, the retained energy per coefficient scales as $\sigma_{X,j}^2\,2\ln 2\,C/N_j$. Multiplying by $N_j$ yields
\[
\mathbb{E}[\|\mathcal{P}_{W_j}\tilde x\|_2^2]
\;\lesssim\;
(2\ln 2)\frac{C}{N_{j_c}}\,4^{-(j-j_c)}\,
\mathbb{E}[\|\mathcal{P}_{W_j}x\|_2^2].
\]
The additive term $\sigma_j^2$ captures decoder hallucination and non-ideal reconstruction effects.
\end{proof}

\begin{corollary}[Detectability via wavelet modulus maxima]
\label{cor:detectability}
Let $x$ be a real image and $\tilde x^{std}$ the output of standard VAE-based inpainting, with difference $\delta^{std}=\tilde x^{std}-x$. For fine scales $j>j_c$,
\[
\mathbb{E}\!\left[\|\mathcal{P}_{W_j}\delta^{std}\|_2^2\right]
\;\gtrsim\;
K\,4^{-(j-j_c)}\,\mathbb{E}\!\left[\|\mathcal{P}_{W_j}x\|_2^2\right],
\]
so long as the residual energy exceeds the decoder noise floor $\sigma_j^2$. Consequently, statistically significant wavelet modulus maxima appear in the background region $\Omega_{bg}$ and propagate across scales.

Assume instead that INP-X enforces exact background preservation, i.e.\ $\tilde x^{ex}|_{\Omega_{bg}}=x|_{\Omega_{bg}}$. Then $\mathcal{P}_{W_j}\delta^{ex}|_{\Omega_{bg}}=0$ for all $j$, and wavelet modulus maxima are confined to the mask boundary $\partial M$ and foreground region $\Omega_{fg}$.
\end{corollary}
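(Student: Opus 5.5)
We prove the two halves of Corollary~\ref{cor:detectability} separately. The standard-inpainting half follows from Theorem~\ref{thm_wavelet_attenuation} combined with an orthogonal decomposition of the wavelet residual. The INP-X half follows from the locality of the difference signal $\delta^{ex}$.

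\textbf{Standard inpainting.} Since $\mathcal{P}_{W_j}$ is linear, $\mathcal{P}_{W_j}\delta^{std}=\mathcal{P}_{W_j}\tilde x^{std}-\mathcal{P}_{W_j}x$. On the background, $\tilde x^{std}$ is (up to diffusion effects) the VAE reconstruction $\mathcal{T}(x)$.

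First, we model $\mathcal{P}_{W_j}x$ as decomposing into a part predictable from the latent code $z$ and an innovation orthogonal to it. This follows the $L_2$-optimal reconstruction argument used in the proof of \cref{thm_attenuation}, where $\tilde x\approx\mathbb{E}[x\mid z]$. It gives the Pythagorean identity
\[
\mathbb{E}\|\mathcal{P}_{W_j}\delta^{std}\|_2^2 \approx \mathbb{E}\|\mathcal{P}_{W_j}x\|_2^2-\mathbb{E}\|\mathcal{P}_{W_j}\tilde x\|_2^2+\sigma_j^2 .
\]
Second, we insert the upper bound \eqref{eq:wavelet_decay} on $\mathbb{E}\|\mathcal{P}_{W_j}\tilde x\|_2^2$. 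This yields
\[
\mathbb{E}\|\mathcal{P}_{W_j}\delta^{std}\|_2^2 \gtrsim \bigl(1-K4^{-(j-j_c)}\bigr)\,\mathbb{E}\|\mathcal{P}_{W_j}x\|_2^2 ,
\]
where any nonnegative decoder-noise contribution only increases the left-hand side. For fine scales the factor $4^{-(j-j_c)}$ is small and $1-K4^{-(j-j_c)}$ is bounded below. In the regime where $K4^{-(j-j_c)}\le 1/2$, this dominates $K4^{-(j-j_c)}$, which gives the stated inequality.

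The hypothesis that the residual exceeds the noise floor $\sigma_j^2$ is used only to ensure that the cross term between hallucinated decoder detail and $x$ cannot cancel the bound.

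To pass from energy to modulus maxima, we use that the residual energy is spread over the background, which occupies most of $\Omega$. By the Mallat characterization of modulus maxima, nonvanishing fine-scale energy in $\Omega_{bg}$ forces local maxima there. Since the VAE loss is scale-uniform in the sense of Theorem~\ref{thm_wavelet_attenuation} for every $j>j_c$, these maxima persist across scales.

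\textbf{INP-X.} The assumption $\tilde x^{ex}|_{\Omega_{bg}}=x|_{\Omega_{bg}}$ gives $\operatorname{supp}\delta^{ex}\subseteq\overline{\Omega_{fg}}$. Wavelet coefficients are inner products with translated, dilated wavelets $\psi_{j,k}$. If $\psi$ is compactly supported, every coefficient whose support avoids $\Omega_{fg}$ vanishes. Any maxima therefore lie in $\Omega_{fg}$ or within a band of width $O(2^{-j})\,|\operatorname{supp}\psi|$ around $\partial M$. The jump of $\delta^{ex}$ across $\partial M$ produces the boundary maxima.

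\textbf{Main obstacle.} The literal claim $\mathcal{P}_{W_j}\delta^{ex}|_{\Omega_{bg}}=0$ holds only up to this boundary band. I would therefore state it as vanishing on $\{p\in\Omega_{bg}:\operatorname{dist}(p,\partial M)>c\,2^{-j}\}$. The other delicate step is justifying the orthogonality used in the Pythagorean identity. If that orthogonality cannot be justified, I would fall back on the triangle inequality $\|a-b\|\ge\|a\|-\|b\|$, which gives the same conclusion up to constants.
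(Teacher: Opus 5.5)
The paper states this corollary without proof, so your outline is in effect the derivation it leaves implicit. Your two departures from the literal statement are genuinely needed.

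First, the regime $K4^{-(j-j_c)}\le 1/2$ matters. $K=2\ln 2\cdot C/N_{j_c}$ is $2\ln 2$ times the number of bits per latent site, so for $j$ near $j_c$ one can have $K4^{-(j-j_c)}>1$. The stated bound can then fail: for a conditional-mean reconstruction the residual never exceeds $\mathbb{E}\|\mathcal{P}_{W_j}x\|_2^2$.

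Second, $\mathcal{P}_{W_j}\delta^{ex}|_{\Omega_{bg}}=0$ is false as written. It holds only outside an $O(2^{-j})$ band around $\partial M$, and only under the compact support of $\psi$ that you added. The paper's MRA does not assume compact support, and for Meyer's band-limited wavelet $\mathcal{P}_{W_j}\delta^{ex}$ is real-analytic, so it cannot vanish on an open subset of $\Omega_{bg}$ unless it vanishes identically.

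Your orthogonality worry can be removed altogether. Within the Gaussian model of Theorem~\ref{thm_wavelet_attenuation}, the rate--distortion converse gives directly
$\mathbb{E}\|\mathcal{P}_{W_j}\delta^{std}\|_2^2\ge 2^{-2C/N_j}\,\mathbb{E}\|\mathcal{P}_{W_j}x\|_2^2=e^{-K4^{-(j-j_c)}}\,\mathbb{E}\|\mathcal{P}_{W_j}x\|_2^2$
for any output that depends on $x$ only through a latent of entropy at most $C$. This covers generated foreground content and decoder hallucination without modelling them. It also shows $\sigma_j^2$ plays no role in the inequality. In your own derivation the $\sigma_j^2$ terms cancel too, so the noise-floor hypothesis does no work there; it belongs to the ``statistically significant'' claim.

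The genuine gap is the passage from energy to modulus maxima in $\Omega_{bg}$.

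\emph{Localization.} Your bound is global over $\Omega$. In standard inpainting, $\delta^{std}$ also carries fine-scale energy inside $\Omega_{fg}$ (generated minus original content), with no a priori upper bound. A global lower bound therefore places nothing in the background, and ``the residual energy is spread over the background'' is an unproved extra assumption. The repair is to apply the same converse only to the $N_j^{bg}$ coefficients whose wavelets are supported in $\Omega_{bg}$. These also see $x$ only through the latent, so their residual is at least $e^{-K'4^{-(j-j_c)}}$ times the corresponding background energy of $x$, with $K'=KN_j/N_j^{bg}$.

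\emph{Mallat--Hwang.} That theorem does not turn nonvanishing fine-scale energy into maxima. It says that if the wavelet transform modulus $|Wf(u,s)|$ has no local maxima on a region at all scales below some $s_0$, then $f$ is uniformly Lipschitz-$\alpha$ there, where $\alpha$ is the number of vanishing moments of $\psi$. So you must show the background residual is irregular. That comes from the noise model of Theorem~\ref{thm_attenuation} (the sensor noise discarded by the bottleneck is nowhere that regular), not from an energy bound. A smooth low-amplitude shift of the background also has nonzero fine-scale energy, yet yields no conclusion.

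\emph{Propagation across scales.} This requires the maxima-line connectivity available for derivative-of-Gaussian wavelets. A bound holding separately at each $j>j_c$ does not supply it.
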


\begin{remark}[Parseval equivalence and Fourier attenuation]
For orthonormal wavelets,
\[
\|x\|_2^2 = \sum_j \|\mathcal{P}_{W_j}x\|_2^2,
\]
which is equivalent (up to normalization constants determined by the Fourier transform convention) to Parseval's identity in the Fourier domain:
\[
\|x\|_2^2 = \int_{\mathbb{R}^2} |\hat x(\omega)|^2\,d\omega.
\]
Thus, the Fourier-domain spectral gap established in \cref{thm_attenuation} corresponds exactly to attenuation of $\|\mathcal{P}_{W_j}\tilde x\|_2$ at fine scales $j>j_c$. The wavelet formulation makes this deficit spatially localized, enabling precise detection of \emph{where} high-frequency information is lost.
\end{remark}

\begin{remark}[Finite images and boundary effects]
For finite-resolution images, wavelet decompositions require boundary handling (e.g.\ periodic extension or symmetric padding). These affect only $O(H+W)$ coefficients near image borders and do not alter the asymptotic $4^{-(j-j_c)}$ decay of fine-scale energy, nor the detectability conclusions of \cref{cor:detectability}.
\end{remark}

\subsection{VAE Artifacts in Improved Models}
\label{sec:appendix_vae_advanced}

We extend our analysis to state-of-the-art models, specifically SDXL~\cite{podell2024sdxl} and FLUX.1~\cite{labs2025flux1kontextflowmatching}, to investigate if newer architectures with higher channel counts resolve the VAE artifact issue. \cref{fig_vae_sdxl,fig_vae_flux} illustrate the results. Our findings confirm that despite architectural improvements, the correlation between global VAE loss and inpainting error persists, suggesting that this is a fundamental property of autoencoding-based generation.

\begin{figure}[h]
    \centering
    \begin{minipage}{0.48\textwidth}
        \centering
        \includegraphics[width=\linewidth]{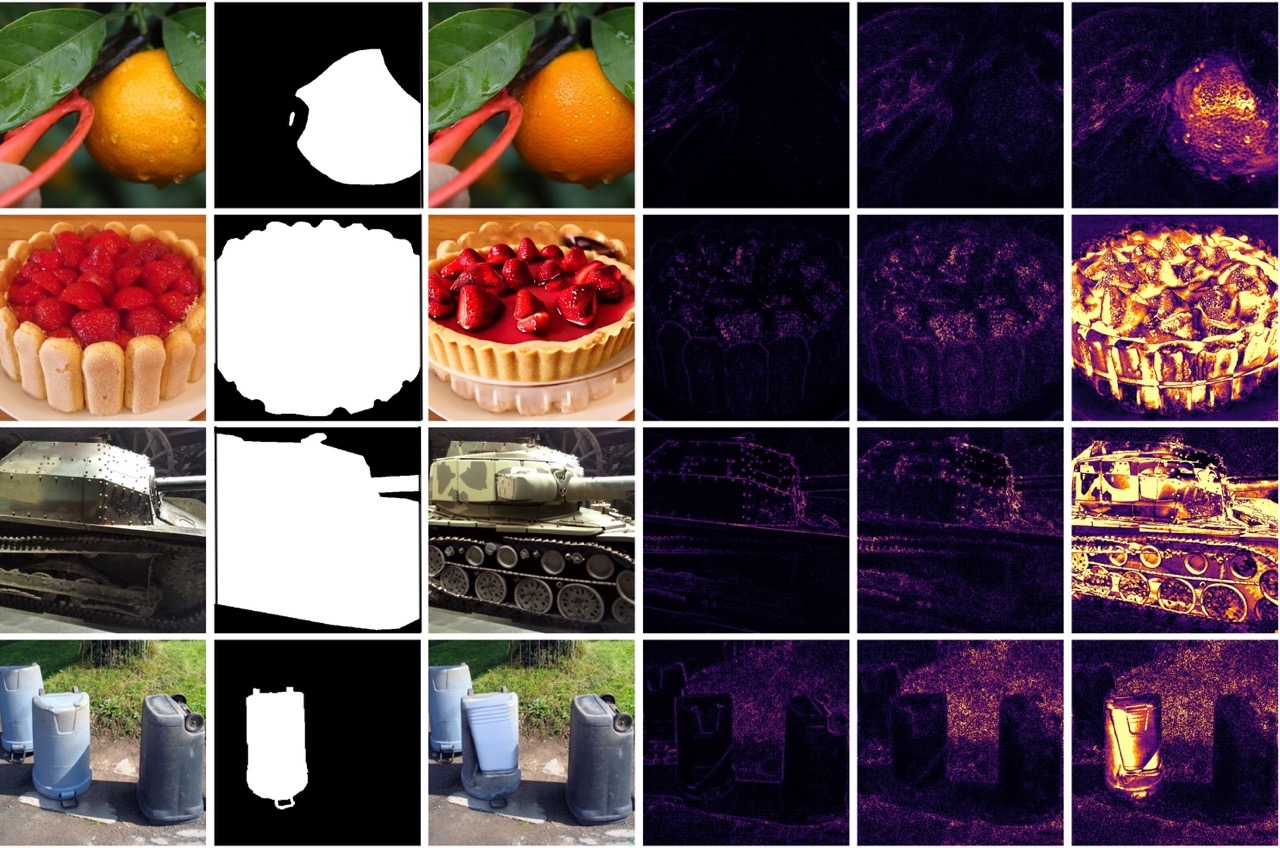}
        \centerline{Analysis of SDXL VAE artifacts.}
        \label{fig_vae_sdxl}
    \end{minipage}
    \hfill
    \begin{minipage}{0.48\textwidth}
        \centering
        \includegraphics[width=\linewidth]{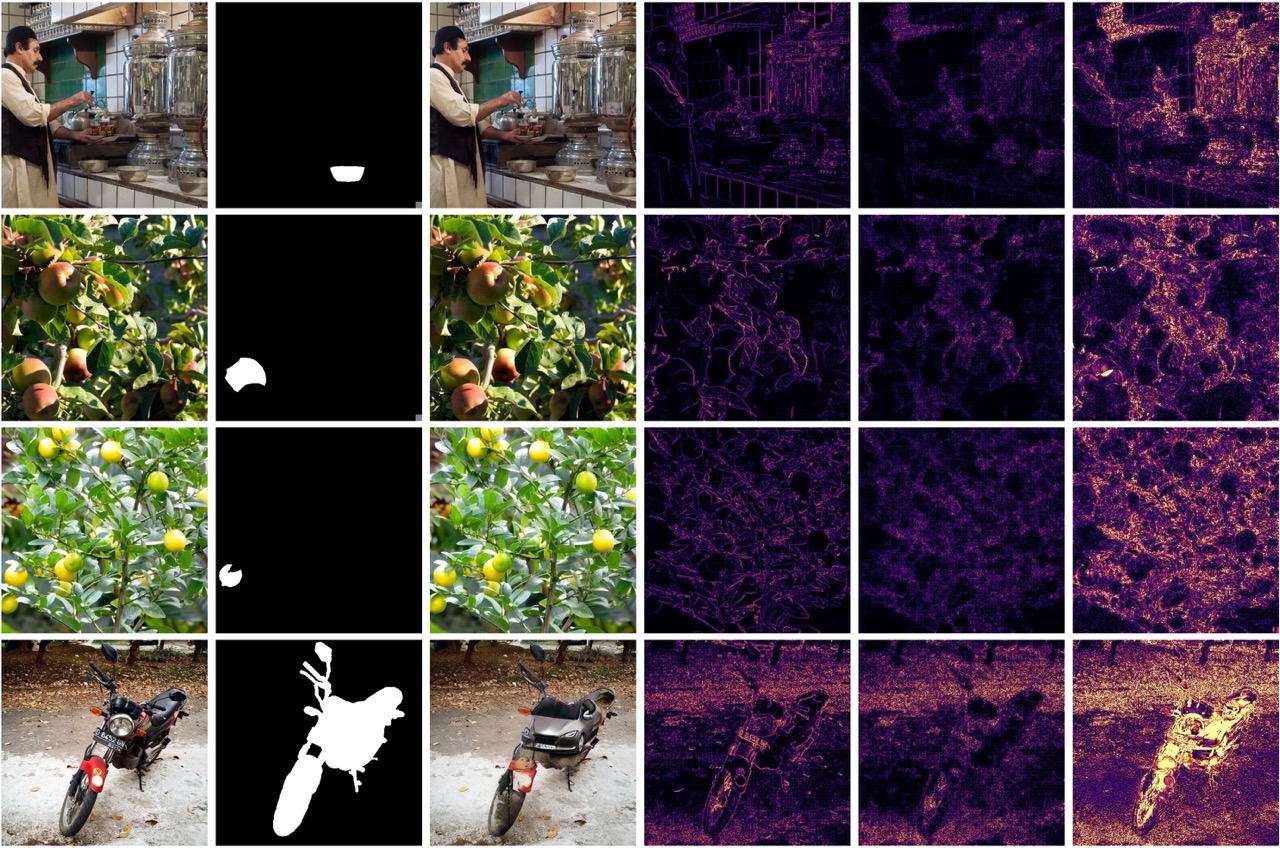}
        \centerline{Analysis of FLUX.1 VAE artifacts.}
        \label{fig_vae_flux}
    \end{minipage}
    \caption{Even improved models like SDXL and FLUX.1 exhibit characteristic VAE reconstruction artifacts that correlate with the inpainting error. Columns from left to right: (1) Original Image, (2) Mask, (3) Inpainted Result, (4) High-Frequency Filters of the Original Image, (5) VAE Reconstruction Artifacts (decoding encoded original), (6) Difference Map (Inpainted - Original).}
\end{figure}

For \textbf{SDXL}, we observe the following correlations:
\paragraph{Image-Level Correlations (Pearson $r$ / Spearman $\rho$):}
\begin{itemize}
    \item VAE Loss vs High Freq: $r=0.9374$, $\rho=0.9009$
    \item VAE Loss vs Inpaint Error: $r=0.1619$, $\rho=0.3672$
    \item Inpaint Err vs High Freq: $r=0.1609$, $\rho=0.3415$
\end{itemize}

\paragraph{Pixel-Level Correlations (Mean $\pm$ Std):}
\begin{itemize}
    \item VAE vs Inpaint: Pearson $0.2662 \pm 0.1624$, Spearman $0.4533 \pm 0.1801$
    \item VAE vs High Freq: Pearson $0.5602 \pm 0.0683$, Spearman $0.5468 \pm 0.0949$
    \item Inpaint vs High Freq: Pearson $0.2456 \pm 0.1568$, Spearman $0.4202 \pm 0.1699$
\end{itemize}

For \textbf{FLUX.1}, which utilizes a 16-channel VAE:
\paragraph{Image-Level Correlations (Pearson $r$ / Spearman $\rho$):}
\begin{itemize}
    \item VAE Loss vs High Freq: $r=0.9362$, $\rho=0.8397$
    \item VAE Loss vs Inpaint Error: $r=0.1633$, $\rho=0.3092$
    \item Inpaint Err vs High Freq: $r=0.1530$, $\rho=0.3372$
\end{itemize}

\paragraph{Pixel-Level Correlations (Mean $\pm$ Std):}
\begin{itemize}
    \item VAE vs Inpaint: Pearson $0.2114 \pm 0.1362$, Spearman $0.3059 \pm 0.1801$
    \item VAE vs High Freq: Pearson $0.4502 \pm 0.0724$, Spearman $0.3614 \pm 0.1501$
    \item Inpaint vs High Freq: Pearson $0.2496 \pm 0.1533$, Spearman $0.4236 \pm 0.1689$
\end{itemize}

\subsection{Analysis of Edge Artifacts}
\label{sec:appendix_edge_analysis}

To verify that the observed drop in dramatic accuracy with Inpainting Exchange is not caused by edge artifacts, we conducted additional experiments using advanced blending techniques. We implemented a soft blending strategy comprising:
\begin{itemize}
    \item \textbf{Soft Alpha Map Generation:} The binary edge band is subjected to a Gaussian Blur ($\sigma$ implicitly calculated from kernel size $5 \times 5$) to generate a continuous, soft alpha matte $\alpha \in [0, 1]$.
    \item \textbf{Soft Alpha Blending (Edge Blurring):} A low-pass filtered version of the input image ($I_{blur}$) is generated via Gaussian smoothing. The final output $I_{out}$ is synthesized by linearly interpolating between the original sharp image ($I_{orig}$) and the blurred version, modulated by the alpha map: 
    \begin{equation}
        I_{out} = I_{orig} \cdot (1 - \alpha) + I_{blur} \cdot \alpha
    \end{equation}
\end{itemize}

We compared this INP-X against standard edge blurring on the best-performing detectors and commercial APIs. The results in \cref{tab_edge_artifacts_appendix} show that the performance remains drastically lower than Standard Inpainting. This finding confirms that edge artifacts do not drive the accuracy drop; rather, it is the absence of global VAE traces.

\begin{table}[h]
\centering
\caption{Are sharp borders the main reason behind low-performing detectors? INP-X vs. Soft Alpha Blending. Both methods yield low detection accuracy, indicating that edge artifacts are not the primary signal for detection.}
\label{tab_edge_artifacts_appendix}
\vskip 0.05in
\begin{footnotesize}
\begin{sc}
\setlength{\tabcolsep}{4pt}
\begin{tabular}{@{}llccccc@{}}
\toprule
Model & Method & Acc & AUC & Prec & Rec & F1 \\
\midrule
\multirow{2}{*}{Corvi2023}
& INP-X & 0.554 & 0.519 & 0.959 & 0.114 & 0.203 \\
& Alpha & 0.593 & 0.646 & 0.975 & 0.190 & 0.318 \\
\midrule
\multirow{2}{*}{CLIP 10}
& INP-X & 0.509 & 0.606 & 0.779 & 0.025 & 0.048 \\
& Alpha & 0.513 & 0.566 & 0.823 & 0.033 & 0.063 \\
\midrule
\multirow{2}{*}{CLIP 10+}
& INP-X & 0.563 & 0.673 & 0.800 & 0.169 & 0.279 \\
& Alpha & 0.588 & 0.728 & 0.838 & 0.218 & 0.346 \\
\midrule
\multirow{2}{*}{SPAI}
& INP-X & 0.543 & 0.567 & 0.546 & 0.506 & 0.525 \\
& Alpha & 0.576 & 0.618 & 0.576 & 0.573 & 0.575 \\
\midrule
\multirow{2}{*}{Sightengine}
& INP-X & 0.548 & 0.588 & 0.944 & 0.102 & 0.184 \\
& Alpha & 0.575 & 0.628 & 0.941 & 0.160 & 0.274 \\
\midrule
\multirow{2}{*}{Hive Moderation}
& INP-X & 0.555 & 0.578 & 0.923 & 0.120 & 0.212 \\
& Alpha & 0.591 & 0.614 & 0.970 & 0.220 & 0.359 \\
\bottomrule
\end{tabular}
\end{sc}
\end{footnotesize}
\end{table}

\subsection{CNN vs. ViT Localization Reliability}
\label{sec:appendix_cnn_vit_localization}

Our experiments reveal a consistent localization gap between CNN-based and ViT-based architectures (\cref{tab_model_robustness_localization}). Across all training conditions, EfficientNet and ResNet-50 achieve substantially higher mIoU (0.45--0.49) compared to ViT-based models (mIoU 0.37--0.41).

This observation aligns with prior research on the interpretability of different architectures~\cite{chefer2021transformer,selvaraju2017gradcam}. Several factors contribute to this gap:

\begin{itemize}
    \item \textbf{Spatial Locality in CNNs:} Convolutional networks process images through hierarchical local operations, where each layer's receptive field gradually expands. This structural inductive bias means that gradient-based attribution methods like Grad-CAM can precisely trace which spatial regions contributed to the final prediction, as gradients flow through spatially coherent feature maps.
    
    \item \textbf{Global Attention in ViTs:} Vision Transformers divide images into patches and process them through self-attention layers that allow every patch to attend to every other patch from the first layer. While this enables capturing long-range dependencies, it also means that attribution is distributed across the entire image, making precise localization inherently more diffuse~\cite{abnar2020attentionrollout}.
    
    \item \textbf{Method Suitability:} Grad-CAM was specifically designed for CNN architectures and leverages the spatial structure of convolutional feature maps. For ViTs, methods like Attention Rollout provide interpretability but are known to be less precise for localization tasks. Specialized methods such as the relevance propagation approach of~\cite{chefer2021transformer} improve ViT interpretability but may still not match CNN localization precision for pixel-level tasks.
\end{itemize}

For inpainting forensics, where accurate localization of the manipulated region is paramount, these findings suggest that CNN-based detector architectures may be preferable when localization is a primary objective, despite ViTs' advantages in other classification tasks.

\end{document}